\documentclass{article}\usepackage[]{graphicx}\usepackage[]{color}
\makeatletter
\def\maxwidth{ %
  \ifdim\Gin@nat@width>\linewidth
    \linewidth
  \else
    \Gin@nat@width
  \fi
}
\makeatother

\definecolor{fgcolor}{rgb}{0.345, 0.345, 0.345}

\usepackage{framed}
\makeatletter
 {\par\unskip\endMakeFramed%
 \at@end@of@kframe}
\makeatother

\definecolor{shadecolor}{rgb}{.97, .97, .97}
\definecolor{messagecolor}{rgb}{0, 0, 0}
\definecolor{warningcolor}{rgb}{1, 0, 1}
\definecolor{errorcolor}{rgb}{1, 0, 0}
\newenvironment{knitrout}{}{} 

\usepackage{alltt} 
\usepackage{nips14submit_e,times}
\usepackage{hyperref}
\usepackage{url}


\usepackage{amsmath}
\usepackage{amsthm}
\usepackage{appendix}

\allowdisplaybreaks


\usepackage{amssymb}
\usepackage{bbm}
\usepackage[sort&compress, numbers]{natbib}

\newcommand{\app}[1]{Appendix~\ref{app:#1}}
\newcommand{\lem}[1]{Lemma~\ref{lem:#1}}
\newcommand{\prop}[1]{Proposition~\ref{prop:#1}}
\newcommand{\mysec}[1]{Section~\ref{sec:#1}}

\newcommand{\npp}{\tilde{\eta}} 
\newcommand{\npq}{\eta} 
\newcommand{\mpq}{m} 
\newcommand{\gauss}{\mathcal{N}}

\theoremstyle{plain}
\newtheorem{theorem}{Theorem}[section]
\newtheorem{proposition}[theorem]{Proposition}
\newtheorem{lemma}[theorem]{Lemma}

\title{Covariance Matrices for Mean Field Variational Bayes}

\author{
Ryan Giordano\\
Department of Statistics\\
University of California, Berkeley\\
\texttt{rgiordano@berkeley.edu} \\
\And
Tamara Broderick \\
Department of Statistics\\
University of California, Berkeley\\
\texttt{tab@stat.berkeley.edu} \\
}

\nipsfinalcopy 

\newcommand{\eq}[1]{Eq.~(\ref{eq:#1})}
\newcommand{\eqw}[1]{Eq.~(#1)}
\newcommand{\fig}[1]{Fig.~(\ref{fig:#1})}
\newcommand{\kl}{\textrm{KL}}
\DeclareMathOperator*{\argmin}{arg\,min}
\newcommand{\mbe}{\mathbb{E}}
\newcommand{\var}{\textrm{Var}}
\newcommand{\cov}{\textrm{Cov}}
\IfFileExists{upquote.sty}{\usepackage{upquote}}{}
\begin{document}

\maketitle


\section{Introduction}

With increasingly efficient data collection methods, scientists are
interested in quickly analyzing ever larger data sets. 
In particular, the promise of these large data sets is not simply to fit 
old models but instead to learn more nuanced patterns from data
than has been possible in the past.
In theory, the Bayesian paradigm promises exactly these desiderata.
Hierarchical modeling allows practitioners to capture complex relationships
between variables of interest. Moreover, Bayesian analysis allows practitioners
to quantify the uncertainty in any model estimates---and to do so coherently
across all of the model variables.

\emph{Mean Field Variational Bayes} (MFVB), a method for approximating
a Bayesian posterior distribution, has grown in 
popularity due to its fast runtime on large-scale data sets
\cite{blei:2003:lda, blei:2006:dp, hoffman:2013:stochastic}.
But it is well known
that a major failing of MFVB is that it gives underestimates
of the uncertainty of model variables that can be almost
arbitrarily worse and provides no information about how
the uncertainties in different model variables interact
\cite{wang:2005:inadequacy, bishop:2006:pattern, rue:2009:approximate, turner:2011:two}.
We develop a fast, general methodology for exponential families that augments MFVB
to deliver accurate uncertainty estimates for model variables---both for individual
variables and coherently across variables.
In particular, as we elaborate in Section~\ref{sec:mfvb},
MFVB for exponential families defines a fixed-point equation in the means
of the approximating posterior, and our approach yields a covariance estimate
by perturbing this fixed point.
Inspired by 
\emph{linear response theory}, which
has previously been applied to Boltzmann machines \cite{kappen:1998efficient}
and loopy belief propagation \cite{welling:2004:linear},
we call our method \emph{linear response variational Bayes} (LRVB).

We demonstrate the accuracy of our 
covariance estimates with experiments on simulated data from a mixture of normals.
Specifically, we show that the LRVB variance estimates are nearly identical to
those produced by a Metropolis-Hastings sampler,
even when MFVB variance is dramatically underestimated.  We also show how the ability
to analytically propagate uncertainty through a graphical model allows the easy computation
of the influence of data points on parameter point estimates, i.e.
``graphical model leverage scores.''
While the data sets we examine in our experiments below (Section~\ref{sec:experiments}) are simulated,
in future work we will demonstrate the applicability and scalability
of LRVB on larger, experimentally-obtained data sets.

\section{Mean-field variational Bayes in exponential families} \label{sec:mfvb}

Denote our $N$ observed data points by the $N$-long column vector $x$,
and denote our unobserved model parameters by $\theta$. Here, $\theta$ is a column
vector residing in some space $\Theta$; it has $J$ subgroups and
total dimension $D$. Our model is specified by a distribution of the
observed data given the model parameters---the likelihood $p(x | \theta)$---and
a prior distributional belief on the model parameters
$p(\theta)$. Bayes' Theorem yields the posterior $p(\theta | x)$.

MFVB approximates $p(\theta | x)$ by a factorized distribution of the
form $q(\theta) = \prod_{j=1}^{J} q(\theta_{j})$ such that the
Kullback-Liebler divergence $\kl(q || p)$ between $q$ and $p$ is
minimized:
$$
  q^{*} := \argmin_{q} \kl(q || p) = \argmin_{q} \mbe_{q} \left[ \log p(\theta | x) - \sum_{j = 1:J} \log q(\theta_{j}) \right].
$$
By the assumed $q$ factorization, the solution to this
minimization obeys the following fixed point equations \cite{bishop:2006:pattern}:
\begin{equation}
	\label{eq:kl_div}
	\log q^{*}_{j}(\theta_{j}) = \mbe_{q^{*}_{i}: i \ne j} \log p(\theta, x) + \mathrm{constant}.
\end{equation}

For index $j$, suppose that
$p(\theta_{j} | \theta_{i: i \ne j}, x)$ is in natural exponential family form:
\begin{equation}
  \label{eq:variational_exp_def}
  p(\theta_{j} | \theta_{i: i \ne j}, x) = \exp(\npp_j^{T} \theta_{j} - A_j(\npp_j))
\end{equation}
with local natural parameter $\npp_j$ and local log partition function $A_j$.
Here, $\npp_j$ may be a function of $\theta_{i: i \ne j}$ and $x$.
If the exponential family assumption above holds for every index $j$,
then we can write
$\npp_{j} = \sum_{R \subseteq [J] \backslash \{j\}} G_{R} \prod_{r \in R} \theta_{r}$,
where $[J] := \{1,\ldots, J\}$ and $G_{R}$ is a
constant in all of $\theta$ (\app{exp_fams}).
It follows from \eq{kl_div} and the assumed factorization of $q^{*}$ that
\begin{equation} \label{eq:exp_approx_marg}
	\log q^{*}_{j}(\theta_{j}) =
            \left\{ \sum_{R \subseteq [J] \backslash \{j\}} G_{R} \prod_{r \in R}
                    [\mbe_{q^{*}_r} \theta_{r}] \right\}^{T} \theta_{j} + \mathrm{constant}
\end{equation}
In particular, we see that $q^{*}_{j}$ is in the same
exponential family form
as $p(\theta_{j} | \theta_{i: i \ne j}, x)$. Let $\npq_{j}$
denote the natural
parameter of $q^{*}_{j}$, and denote the mean parameter of
$q^{*}_{j}$ as $\mpq_{j} :=  \mbe_{q^{*}_{j}} \theta_j$. We see from
\eq{exp_approx_marg} that
$\npq_{j} = \sum_{R \subseteq [J] \backslash \{j\}} G_{R} \prod_{r \in R} \mpq_{r}$.
Since $\mpq_{j}$ is a function of $\npq_{j}$,
we have the fixed point equations
$\mpq_{j} = M_{j}(\mpq_{i:i \ne j})$ for mappings $M_{j}$ across $j$ and $\mpq = M(\mpq)$
for the vector of mappings $M$.

\section{Linear response} \label{sec:lr}

Now define $p_{t}(\theta|x)$ such that its log is a linear
perturbation of the log posterior:
\begin{equation} \label{eq:perturbed_dens}
  \log p_{t}(\theta | x) = \log p(\theta | x) + t^{T} \theta - c(t),
\end{equation}
where $c(t)$ is a constant in $\theta$. Since $c(t)$ normalizes
the $p_{t}(\theta | x)$ distribution, it is in fact the cumulant
generating function of $p(\theta | x)$. Further, every
conditional distribution $p_{t}\left(\theta_{j} | \theta_{i: i \ne j}, x\right)$
is in the same exponential family as every conditional
distribution $p\left(\theta_{j} | \theta_{i: i \ne j}, x\right)$ 
by construction. So, for each $t$, we have mean field 
variational approximation $q_{t}^{*}$ with
marginal means $\mpq_{t,j} := E_{q_{t}^{*}} \theta_j$ and
fixed point equations
$\mpq_{t,j} = M_{t,j}(\mpq_{t,i:i \ne j})$ across $j$; hence $\mpq_t = M_t(\mpq_t)$.
Taking derivatives of the latter relationship with respect to $t$, we find
\begin{equation} \label{eq:mean_derivs}
  \frac{d \mpq_t}{d t^T}
    = \frac{\partial M_t}{\partial \mpq_t^T} \frac{d \mpq_t}{d t^T}
      + \frac{\partial M_t}{\partial t^T}.
\end{equation}
In particular, note that $t$ is a vector of size $D$ (the total dimension
of $\theta$), and $\frac{d \mpq_t}{d t^T}$, e.g., 
is a matrix of size $D \times D$ with $(a,b)$th entry equal to the scalar $d\mpq_{t,a} / dt_{b}$.

Since $q_t^{*}$ is the MFVB approximation for the perturbed posterior $p_t(\theta|x)$,
we may hope that $\mpq_t = E_{q_{t}^{*}} \theta$ is close to the 
perturbed-posterior mean $\mbe_{p_t} \theta$.  The practical success of
MFVB relies on the fact that this approximation is often good in practice.
To derive interpretations of the individual terms in \eq{mean_derivs},
we assume that this equality of means holds,
but we indicate where we use this assumption with
an approximation sign: $\mpq_t \approx \mbe_{p_t} \theta$.
A fuller derivation of the next set of equations is given in
\app{lr_derivs}.
\begin{equation} \label{eq:dm_dt}
  \frac{d \mpq_t}{d t^T} \approx \frac{d}{dt^T} \mbe_{p_t} \theta = \Sigma_{p_{t}}
  \quad \textrm{ and } \quad
  \frac{\partial M_t}{\partial t^T} =
       \frac{\partial}{\partial t^T} \mbe_{q_t^*} \theta = \Sigma_{q_{t}^*}
  \quad \textrm{ and } \quad
  \frac{d M_t}{d \mpq_t^T} = \Sigma_{q_{t}^*} \frac{ \partial \npq_t }{ \partial \mpq_t^T },
\end{equation}
where $\Sigma_{p_{t}}$ is the covariance matrix of $\theta$ under $p_{t}$,
$\Sigma_{q_{t}^*}$ is the covariance matrix of $\theta$ under $q_t^*$,
and $\npq_t = (\npq_{t,1}^T, \ldots, \npq_{t,J}^T)^{T}$ is the vector defined
by stacking natural parameters from each $q_{t,j}^{*}$ distribution.

Now let $H := \left. \frac{ \partial \npq_t }{ \partial \mpq_t^T } \right|_{t=0}$.
Then substituting \eq{dm_dt} into \eq{mean_derivs} and
evaluating at $t=0$, we find
\begin{equation} \label{eq:lrvb_est}
  \Sigma_{p} \approx \Sigma_{q^*} H \Sigma_{p} + \Sigma_{q^*}
    \quad \Rightarrow \quad
    \Sigma_{p} \approx (I - \Sigma_{q^*} H)^{-1} \Sigma_{q^*}
\end{equation}
Thus, we call $\hat{\Sigma} := (I - \Sigma_{q^*} H)^{-1} \Sigma_{q^*}$
the LRVB estimate of the true posterior covariance $\Sigma_{p}$.
\footnote{\eq{lrvb_est} involves the inverse of a matrix as large
as the total number of natural parameters, which in many problems
can be impractical.  However, since the variational covariance
$\Sigma_{q^*}$ is block diagonal and $H$ is often sparse, one may
be able to use Schur complements to efficiently find sub-matrices of $\hat\Sigma$.
In \app{leverage} we work through two examples of this technique.}

\section{Experiments} \label{sec:experiments}

\subsection{Mixture of normals} \label{sec:normal_mix}

Mixture models constitute some of the most popular models for MFVB application
\cite{blei:2003:lda, blei:2006:dp} and are often used as an example
of where MFVB covariance estimates may go awry \cite{bishop:2006:pattern, turner:2011:two}.
Here we focus on a
$K$-component, one-dimensional mixture of normals likelihood.
In what follows, $\pi_k$ is the probability of the $k$th component,
$\gauss$ denotes the univariate normal distribution,
$\mu_k$ is the mean of the $k$th component, and $\tau_k$ is the 
precision of the $k$th component (so $\tau^{-1}_k$ is
variance). $N$ is the number of data points, and $x_{n}$
is the $n$th observed data point. Then the likelihood is
\begin{equation} \label{eq:normal_mixture_model}
  p(x | \pi, \mu, \tau) =
    \prod_{n=1:N} \sum_{k=1:K} \pi_{k} \gauss(x_n | \mu_k, \tau^{-1}_k).
\end{equation}
To complete the generative model, we assign priors
\begin{equation} \label{eq:mix_priors}
  \pi \sim \mathrm{Dirichlet}_{K}(1),
    \quad \tau \sim \mathrm{Gamma}(2.0001, 0.1),
    \quad \mu \sim \gauss(0, 100).
\end{equation}


We wish to approximate the covariance matrix of the
parameters $\log(\pi), \mu, \log(\tau)$ in the posterior
distribution $p(\pi, \mu, \tau | x)$
from the preceding generative model.
In our experiment, $K=3$ and $N=3000$
for each of 100 simulations.
We compare three different approaches to compute the posterior covariance:
a Metropolis-Hastings (MH) sampler, MFVB, and LRVB.  The MH sampler
draws independent proposals centered at the MAP estimate in order to
avoid label-switching problems.  The two VB solutions augment
\eq{normal_mixture_model} with indicator variables, $z$, which indicate which
component each point was drawn from.
We note that for each of the parameters $\log(\pi)$, $\mu$, and $\log(\tau)$,
both MH and MFVB produce point estimates close to the true values,
so our key assumption in the LRVB derivations of Section~\ref{sec:lr}
appears to hold.
To compare the covariance matrices, we use MH as a ground truth;
for the low-dimensional model we are using, it is reasonable
to expect that MH should return a good approximation of the true posterior.
We see in \fig{SimulationStandardDeviations}
that the LRVB estimates agree with the MH posterior variance
while MFVB consistently underestimates the posterior variance.

\begin{knitrout}
\definecolor{shadecolor}{rgb}{0.969, 0.969, 0.969}\color{fgcolor}\begin{figure}[ht!]

{\centering \includegraphics[width=.32\linewidth,height=.4\linewidth]{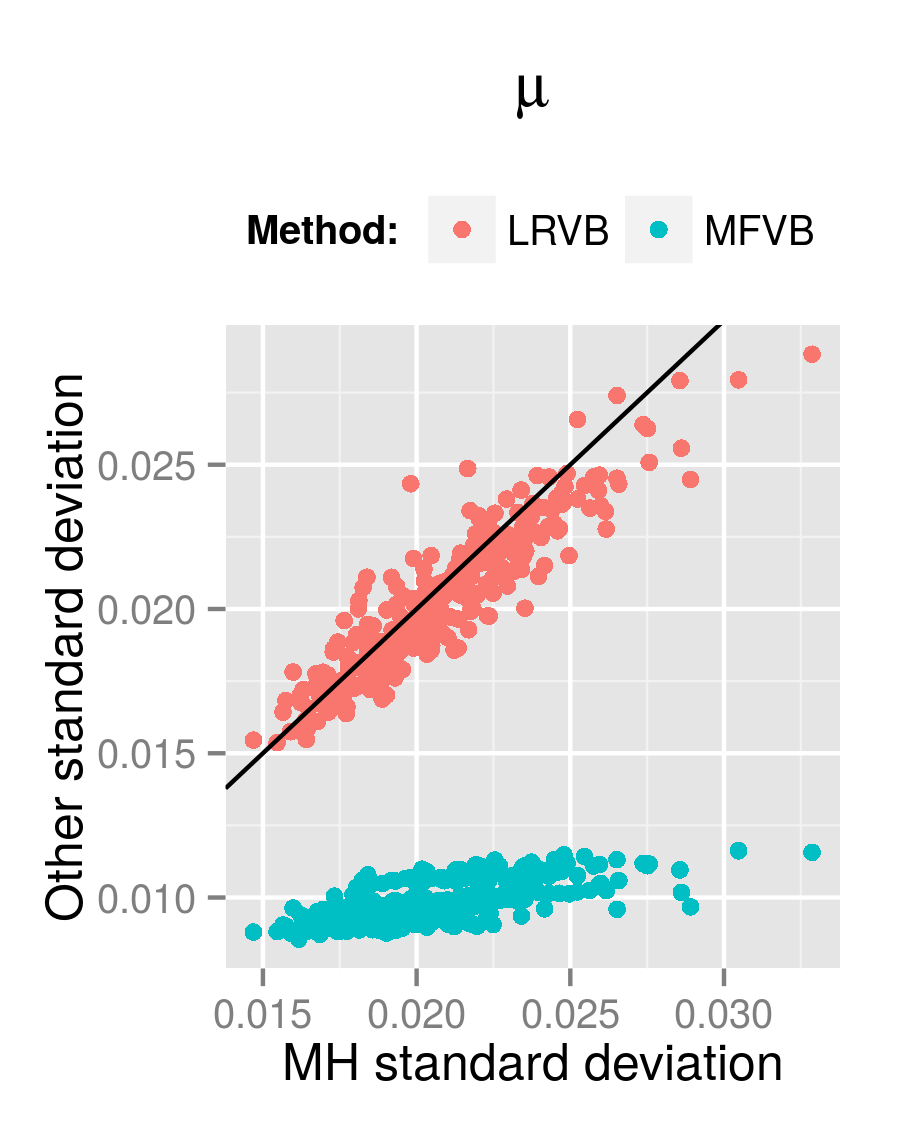} 
\includegraphics[width=.32\linewidth,height=.4\linewidth]{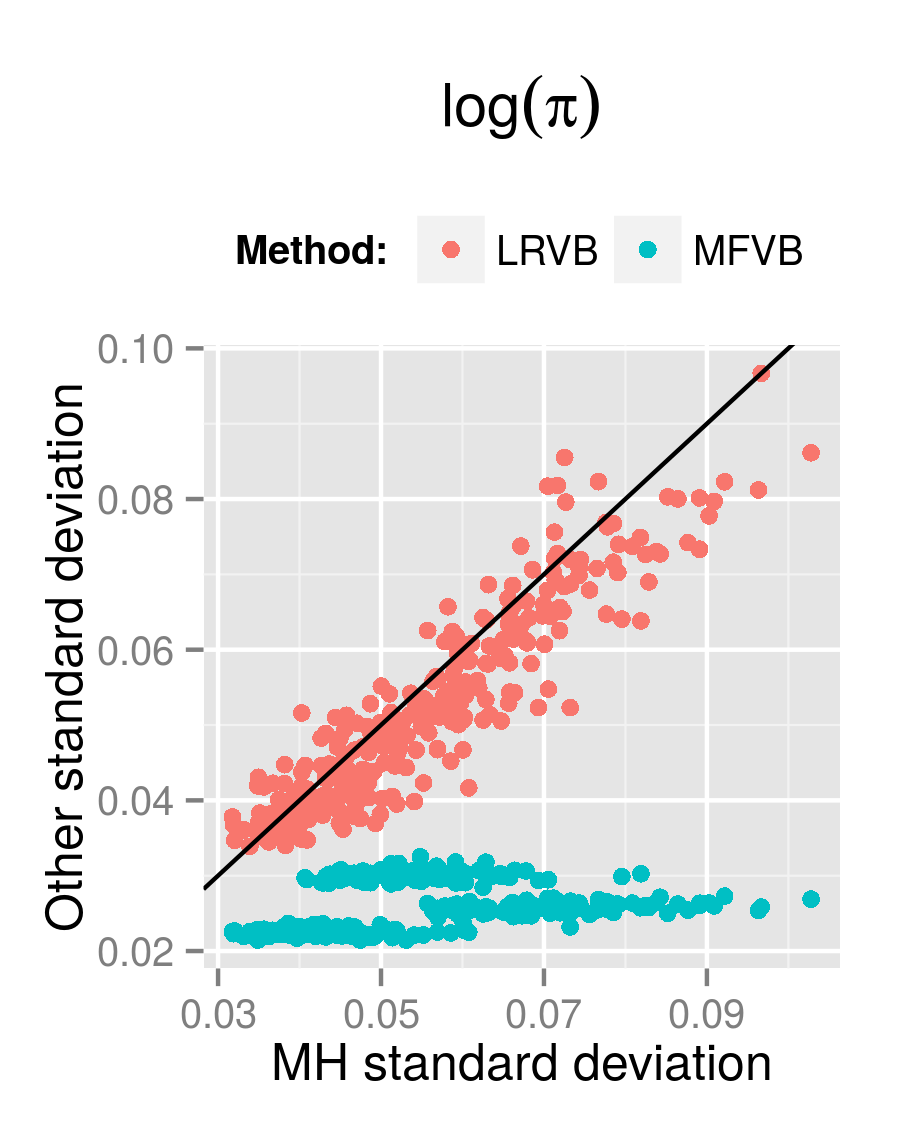} 
\includegraphics[width=.32\linewidth,height=.4\linewidth]{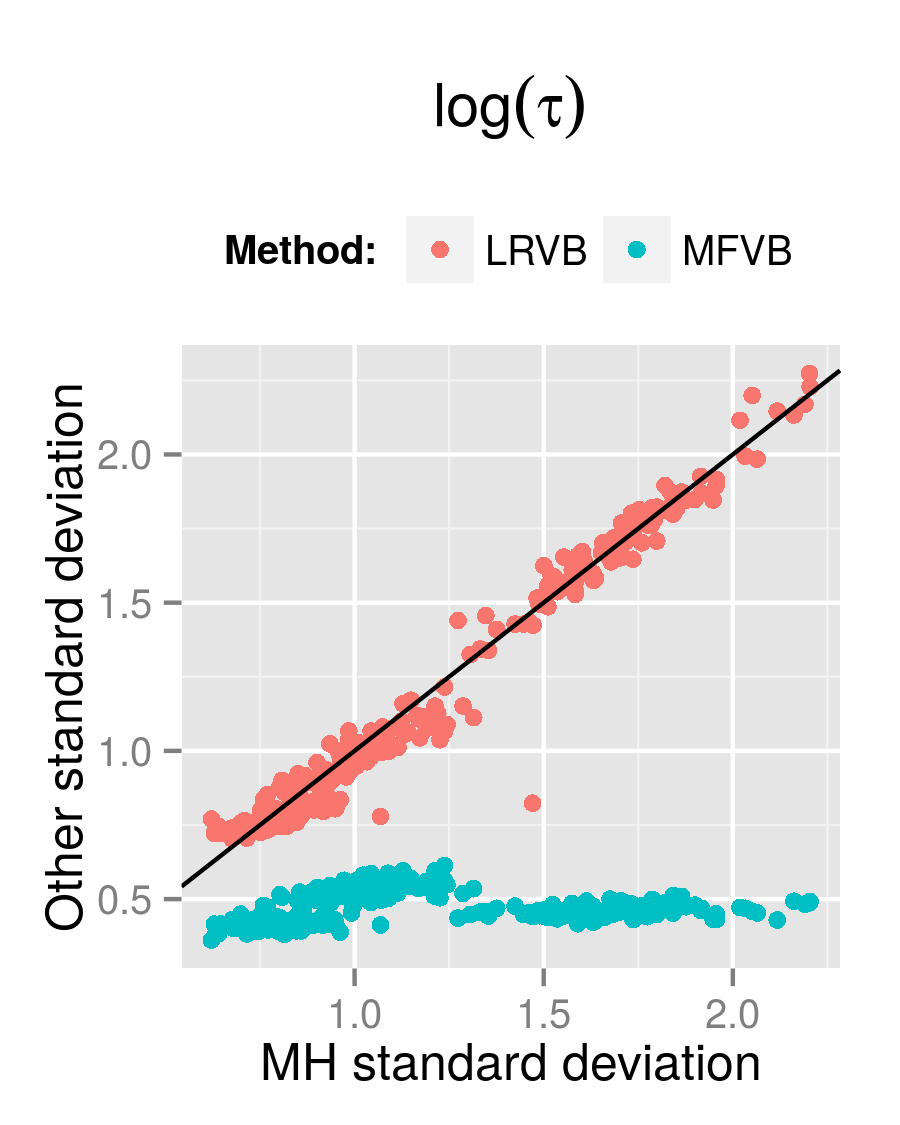} 

}

\caption[Comparison of estimates of the posterior standard deviation for each model parameter according to methods MH, MFVB, and LRVB and across 100 simulations]{Comparison of estimates of the posterior standard deviation for each model parameter according to methods MH, MFVB, and LRVB and across 100 simulations.\label{fig:SimulationStandardDeviations}}
\end{figure}

\end{knitrout}

\subsection{Sensitivity analysis} \label{sec:normal_sensitivity}

Next consider a slight variation to the model of \mysec{normal_mix}.
We retain the distribution of $p(x | \pi, \mu, \tau)$ in \eq{normal_mixture_model}
but now assume that the observed data $x^*$ are actually independent
noisy versions of $x$:
$
  x_n^* \sim \gauss(x_n, \sigma^2),
$
for a deterministic constant $\sigma^2$.
We retain the prior on $\mu$ in \eq{mix_priors}, but fix $\pi$ and $\tau$ at their
true values. In this new model, $x$ and $\mu$
are the unknown parameters.  Using LRVB, we can estimate the posterior covariance
between any $x_n$ and the mixture parameters $\mu$.
If we look at this covariance as $\sigma^2 \rightarrow 0$,
we obtain a type of \emph{leverage score}. That is, the
limiting value of this covariance can be used to estimate the influence
of observation $x_n$ on the mixture parameters in the spirit of classical
linear model leverage scores from the statistics literature.
LRVB leads to a straightforward analytic expression for these
covariances, which can be found in \eq{lrvb_lev_scores} in \app{leverage}.
\footnote{\app{leverage} also includes a proof
that this LRVB-limiting method reproduces classical leverage scores when
applied to linear regression.}

Note that these leverage scores are impossible to compute in naive MFVB,
since they involve correlations between distinct mean field components, and
difficult to compute using MH, since they require estimating a large number
of very small covariances with a finite number of draws.

To evaluate these LRVB-derived leverage scores, we compare them to the
effect of manually perturbing our data and re-fitting the model.
Here, we choose $K=2$ components in the mixture model and $N=500$.  (The
small $N$ is chosen to make the manual perturbation calculations more manageable.)
The LRVB-derived leverage scores
are plotted as a function of $x_n$ location on the lefthand side of \fig{LeverageGraph}.
We can see from the comparison on the righthand side of 
\fig{LeverageGraph} that the LRVB-derived leverage scores
match well with the results of manual perturbation, which
took over 30 times longer to compute.

\begin{knitrout}
\definecolor{shadecolor}{rgb}{0.969, 0.969, 0.969}\color{fgcolor}\begin{figure}[ht!]

{\centering \includegraphics[width=.49\linewidth,height=.3755\linewidth]{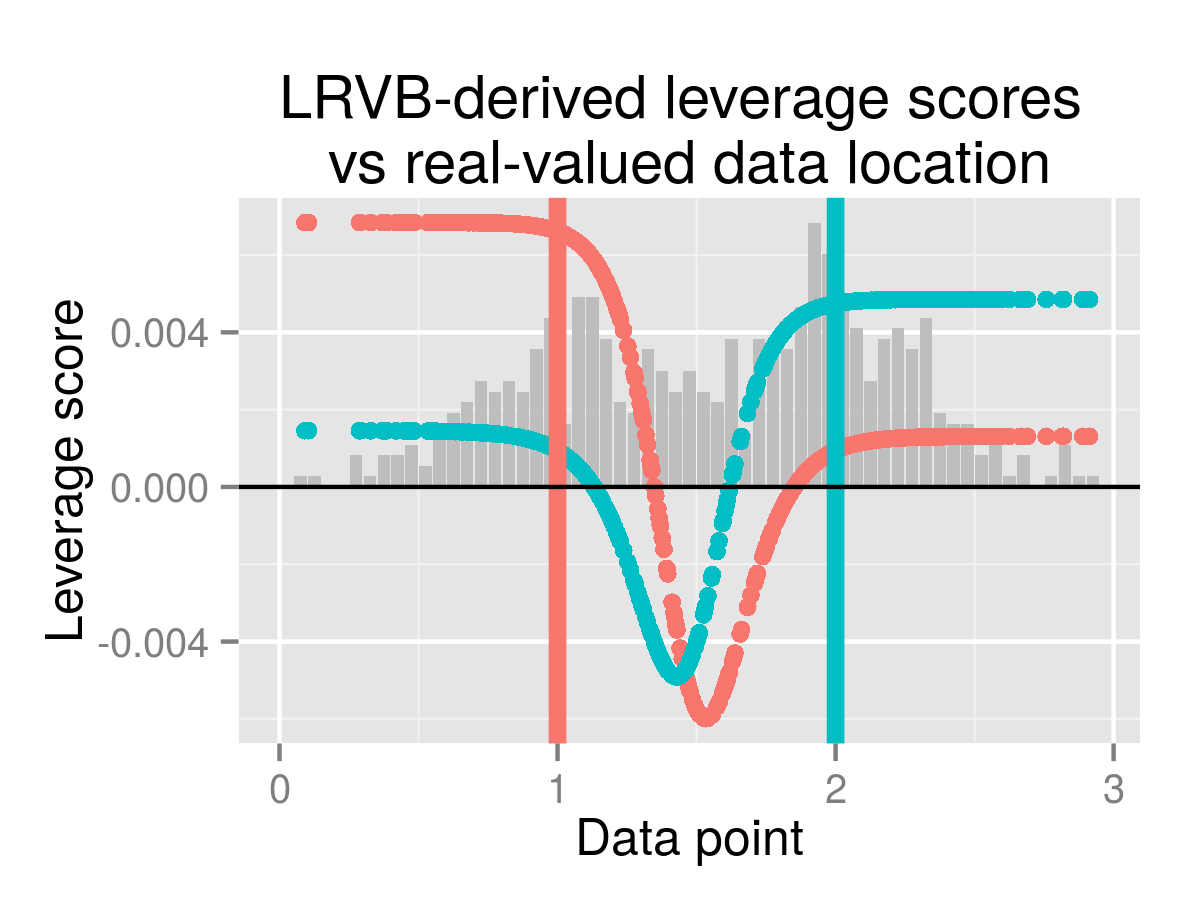} 
\includegraphics[width=.49\linewidth,height=.3755\linewidth]{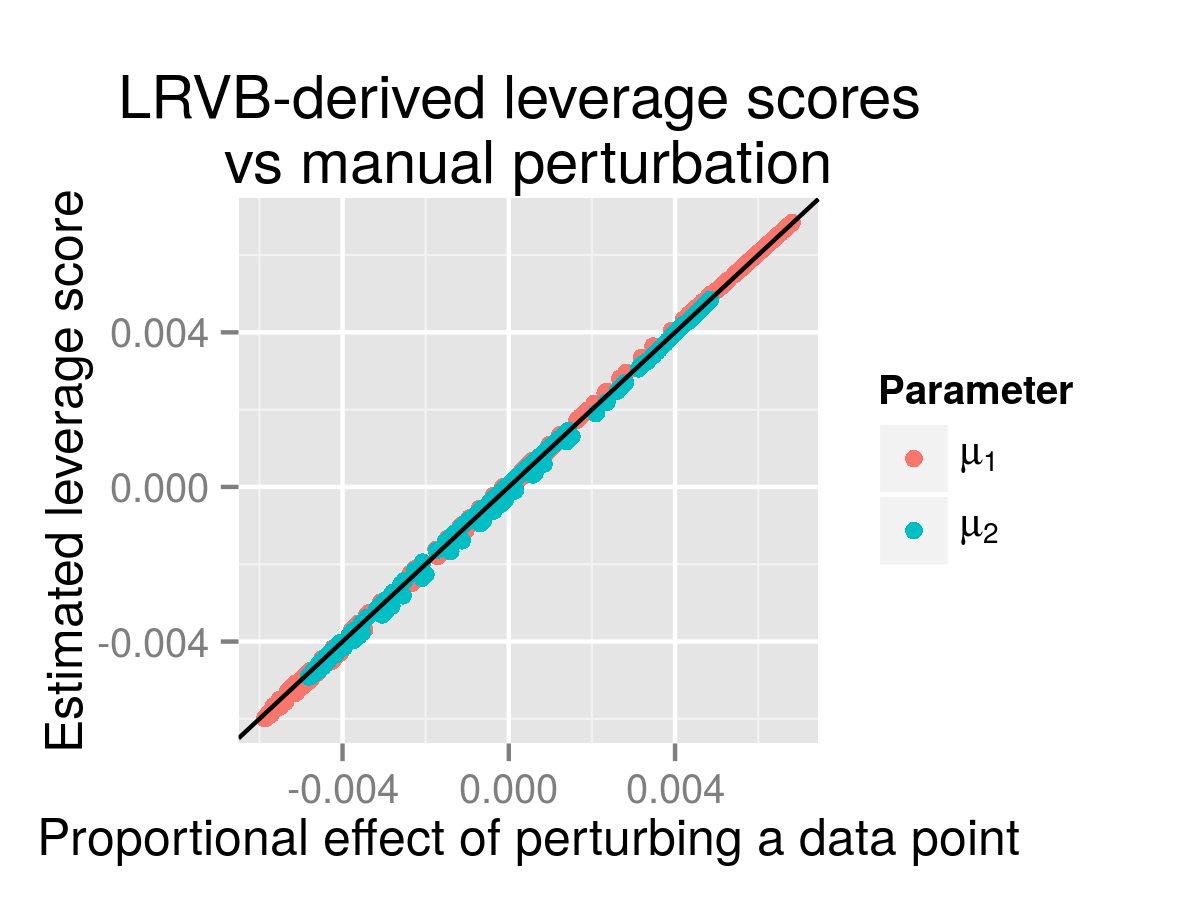} 

}

\caption[Left]{Left: LRVB leverage scores by data point location. Right: leverage comparison.\label{fig:LeverageGraph}}
\end{figure}

\end{knitrout}

As expected, the data points with the greatest effect on the location of a
component are the ones most likely to be assigned to the component.
Interestingly, though, data still retain leverage on a component even when they
are assigned with certainty to the other component.  
Indeed, a data point assigned to one component with probability close to one
will affect that component's mean, which in turn affects the classification of other
data points, which then affects the location of the other component.
In this way, we see that LRVB is
estimating covariances that are the results of complex chains of correlations.

\subsubsection*{Acknowledgments}

The authors thank Michael I.~Jordan for suggesting that we look at linear
response theory and Alex Blocker for helpful comments.
R.~Giordano and T.~Broderick were funded by Berkeley Fellowships.

\bibliographystyle{unsrt}
\bibliography{nips_workshop_abstract}

\newpage
\appendix
\appendixpage

\section{Derivations} \label{app:derivs}

\subsection{MFVB for conditional exponential families} \label{app:exp_fams}

\begin{lemma}
Suppose \eq{variational_exp_def} holds across all $j$; that is,
$$
p(\theta_{j} | \theta_{i: i \ne j}, x) = \exp(\npp_j^{T} \theta_{j} - A_j(\npp_j)).
$$
Then, for the natural parameter
$\npp_{j}$, we have
$$
  \npp_{j} = \sum_{R \subseteq [J] \backslash \{j\}} G_{R} \prod_{r \in R} \theta_{r},
$$
where $[J] := \{1,\ldots,J\}$ and $G_R$ is constant in $\theta$.
\end{lemma}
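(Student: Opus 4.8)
The plan is to exploit the fact that \emph{every} conditional is in natural exponential family form simultaneously, which rigidly constrains the joint log-density. First I would fix the data $x$ (so that any dependence on $x$ may be absorbed into constants) and work with $f(\theta) := \log p(\theta, x)$. For each $j$, factor the joint as $\log p(\theta, x) = \log p(\theta_{j} \mid \theta_{i: i \ne j}, x) + \log p(\theta_{i: i \ne j}, x)$ and substitute \eq{variational_exp_def}. Because $A_j$ depends on $\theta$ only through $\npp_j$, which by hypothesis is a function of $\theta_{i: i \ne j}$ alone, differentiating in the block $\theta_j$ annihilates both $A_j(\npp_j)$ and the marginal term $\log p(\theta_{i: i \ne j}, x)$, leaving
\begin{equation*}
  \frac{\partial}{\partial \theta_j} f(\theta) = \npp_j(\theta_{i: i \ne j}).
\end{equation*}
The essential consequence is that this gradient does not depend on $\theta_j$ itself.

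Next I would observe that the previous display says $f$ is affine in each block $\theta_j$ separately: its gradient in block $j$ is constant along $\theta_j$. A function on the product space $\Theta$ that is simultaneously affine in each of its $J$ blocks is \emph{multi-affine}, and such functions admit a unique multilinear expansion over subsets of $[J]$,
\begin{equation*}
  f(\theta) = \sum_{S \subseteq [J]} T_S\big(\{\theta_s\}_{s \in S}\big),
\end{equation*}
where each $T_S$ is multilinear (linear in each argument $\theta_s$, $s \in S$) and $T_{\emptyset}$ is a constant. I would establish this expansion by induction on $J$ (peeling off one block at a time using affineness) or, equivalently, by M\"obius inversion over the Boolean lattice, using finite-difference operators to define the $T_S$; each $T_S$ can then be written as the contraction of a fixed coefficient tensor against $\bigotimes_{s \in S} \theta_s$.

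Finally I would differentiate the multilinear expansion with respect to $\theta_j$ and compare with the first display. Only the terms with $j \in S$ survive, and $\partial_{\theta_j}$ removes the $\theta_j$-factor from each, so reindexing $R := S \backslash \{j\} \subseteq [J] \backslash \{j\}$ yields
\begin{equation*}
  \npp_j = \frac{\partial}{\partial \theta_j} f
         = \sum_{R \subseteq [J] \backslash \{j\}} G_R \prod_{r \in R} \theta_r,
\end{equation*}
where $G_R$ collects the (now $\theta$-free) coefficient tensor obtained from $T_{R \cup \{j\}}$ together with whatever dependence on $x$ was carried along; this is exactly the claimed form, with $R = \emptyset$ supplying the constant term. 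The main obstacle is the second step: making the multilinear decomposition of a multi-affine function precise, including the tensor bookkeeping that arises because each $\theta_r$ is a vector rather than a scalar, so that the loose product notation $G_R \prod_{r \in R} \theta_r$ is correctly interpreted as a multilinear form. Everything else is routine once the gradient identity and this decomposition are in hand.
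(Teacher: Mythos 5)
Your proof is correct and follows essentially the same route as the paper's: observe that the log posterior is affine in each block $\theta_j$ separately, expand it multilinearly over subsets of $[J]$, and read off the coefficient of $\theta_j$. The only difference is one of rigor---the paper simply asserts the expansion $\log p(\theta|x) = \sum_{R' \subseteq [J]} G'_{R'} \prod_{r \in R'}\theta_r$ and ``collects terms with $j \in R'$,'' whereas you correctly identify that the multi-affine-to-multilinear decomposition is the step that actually needs justification and sketch how to supply it.
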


\begin{proof}
We see that 
$\log p(\theta | x) = \log p(\theta_{j} | \theta_{i: i \ne j}, x) + \log p(\theta_{i: i \ne j} | x)$
depends on $\theta_j$ only via the first term in the sum. So by \eq{variational_exp_def},
$\log p(\theta | x)$ is linear in $\theta_j$, and we can write
$$
  \log p(\theta | x) = \sum_{R' \subset [J]} G'_{R'} \prod_{r \in R'} \theta_r + \mathrm{constant},
$$
where $G'_{R'}$ and the final summand are constant in all of $\theta$.
The result follows by collecting those terms where $j \in R'$.
\end{proof}

\subsection{Linear response} \label{app:lr_derivs}

We here derive the three equalities in \eq{dm_dt}, which appear
as three propositions below. In these propositions,
we assume that $p(\theta | x)$
is in the exponential family as above. We will further assume
that all natural parameters (for $p$ or variational approximations)
are in the interior of the parameter space
and that $t$ is sufficiently small.
These assumptions
will allow us to apply dominated convergence (cf.\ Section 2.3
of \cite{keener:2010:theoretical}).

\begin{proposition} \label{prop:dm_dt}
  $\frac{d}{dt} \mbe_{p_t} \theta = \Sigma_{p_{t}}$.
\end{proposition}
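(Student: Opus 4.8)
The plan is to recognize that $p_t(\theta \mid x)$ is simply an exponential tilt of $p(\theta \mid x)$ with natural parameter $t$ and sufficient statistic $\theta$, and then to differentiate the mean directly under the integral sign, working entry by entry to avoid any ambiguity about matrix orientation. First I would make the definition in \eq{perturbed_dens} explicit: exponentiating gives $p_t(\theta \mid x) = p(\theta \mid x)\exp(t^T \theta - c(t))$, where the normalizer satisfies $e^{c(t)} = \int p(\theta \mid x)\exp(t^T\theta)\,d\theta = \mbe_p[\exp(t^T\theta)]$, so that $c(t)$ is indeed the cumulant generating function of $\theta$ under $p$. The mean under $p_t$ is then $\mbe_{p_t}\theta = \int \theta\, p(\theta \mid x)\exp(t^T\theta - c(t))\,d\theta$, and the whole claim reduces to differentiating this integral with respect to each coordinate $t_b$.

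The key step is to differentiate the tilted density itself. A short computation — differentiating $e^{c(t)}$ under the integral and dividing through by $e^{c(t)}$ — gives $\partial c(t)/\partial t_b = \mbe_{p_t}\theta_b$, and hence
\[
  \frac{\partial}{\partial t_b} p_t(\theta \mid x) = p_t(\theta \mid x)\,\bigl(\theta_b - \mbe_{p_t}\theta_b\bigr).
\]
Substituting this into $\mbe_{p_t}\theta_a = \int \theta_a\, p_t(\theta \mid x)\,d\theta$ and passing the derivative inside the integral, the $(a,b)$ entry becomes
\[
  \frac{\partial}{\partial t_b}\mbe_{p_t}\theta_a
    = \int \theta_a\,\bigl(\theta_b - \mbe_{p_t}\theta_b\bigr)\, p_t(\theta \mid x)\,d\theta
    = \mbe_{p_t}[\theta_a \theta_b] - (\mbe_{p_t}\theta_a)(\mbe_{p_t}\theta_b),
\]
which is precisely $\cov_{p_t}(\theta_a, \theta_b)$, the $(a,b)$ entry of $\Sigma_{p_t}$. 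Collecting these entries over all $a$ and $b$ assembles the matrix identity $\tfrac{d}{dt}\mbe_{p_t}\theta = \Sigma_{p_t}$.

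The only nontrivial point — and the step I would expect to be the main obstacle — is justifying the two interchanges of differentiation and integration (once to obtain $\partial c/\partial t_b$, once to differentiate the mean). This is exactly where the standing assumptions of \app{lr_derivs} enter: with all natural parameters in the interior of the parameter space and $t$ sufficiently small, the relevant integrands admit local integrable dominating functions, so dominated convergence (cf.\ Section~2.3 of \cite{keener:2010:theoretical}) licenses differentiation under the integral. Once that regularity is in place, the remainder is the routine moment identity for exponential families, and no further work is needed.
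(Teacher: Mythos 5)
Your proof is correct and follows essentially the same route as the paper's: differentiate $\mbe_{p_t}\theta = \int \theta\, e^{t^T\theta - c(t)} p(\theta\mid x)\, d\theta$ under the integral sign (justified by dominated convergence under the standing assumptions), use $\tfrac{\partial c(t)}{\partial t_b} = \mbe_{p_t}\theta_b$, and recognize the resulting expression $\mbe_{p_t}[\theta\theta^T] - \mbe_{p_t}[\theta]\,\mbe_{p_t}[\theta]^T$ as $\Sigma_{p_t}$. The only cosmetic difference is that you work entry by entry and make the derivative of the cumulant generating function explicit, whereas the paper carries out the same computation in matrix form.
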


\begin{proof}

\begin{align*} 
  \frac{d}{d t^{T}} \mbe_{p_{t}} \theta
    &= \frac{d}{d t^{T}} \int_{\theta}
		      \theta e^{t^{T} \theta - c(t)}
        p(\theta | x) d\theta
        \quad \textrm{by the definition of $p_t$ in \eq{perturbed_dens}} \\
    &= \int_{\theta} \theta
          \left[
            \frac{d}{d t^{T}} e^{t^{T} \theta - c(t)}
          \right]
        p(\theta | x) d\theta
        \quad \textrm{by dominated convergence} \\
	  &= \int_{\theta} \theta \theta^{T} e^{t^{T} \theta - c(t)} p(\theta | x) d\theta
		  - \int_{\theta} \theta e^{t^{T} \theta - c(t)} p(\theta | x) d\theta
				\cdot \frac{d c(t)}{d t^{T}}
        \\
    &= \mbe_{p_t}\left[\theta \theta^T\right] -
       \mbe_{p_t}\left[\theta\right] \mbe_{p_t}\left[\theta\right]^T = \Sigma_{p_{t}}
\end{align*}

\end{proof}

To approximate $\frac{d \mpq_{t}}{d t^{T}}$, we assume
not only that $\mpq_{t} \approx \mbe_{p_t} \theta$ for any particular $t$ but
further that $\mpq_t$ tracks the true mean $\mbe_{p_t} \theta$ as $t$ varies.
In this case, by \prop{dm_dt}, we have
$$
  \frac{d \mpq_{t}}{d t^{T}}
    \approx \frac{d}{d t^{T}} \mbe_{p_{t}} \theta
    = \Sigma_{p_{t}},
$$
the first (approximate) equality in \eq{dm_dt}.

To derive the final two equalities in \eq{dm_dt}, we make use of the following lemma.

\begin{lemma} \label{lem:dM_deta}
  $M_{t,j}$ depends on $t$ only via $\npq_{t,j}$, the natural parameter of the $q_{t,j}^{*}$ distribution.
  And $\frac{dM_{t,j}}{d\eta_{t,j}^T} = \Sigma_{q_{t,j}^{*}}$.
\end{lemma}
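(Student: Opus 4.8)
The plan is to exhibit $M_{t,j}$ as a composition of two maps: an affine map sending the input means $\mpq_{t,i:i\ne j}$ to the local natural parameter $\npq_{t,j}$, followed by the fixed, $t$-free exponential-family mean map sending $\npq_{t,j}$ to $\mpq_{t,j}$. The first claim then falls out of identifying where $t$ enters the composition, and the second claim reduces to the standard exponential-family fact that the Jacobian of the mean map with respect to the natural parameter equals the covariance of the sufficient statistic, which I would establish exactly as in \prop{dm_dt}.

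First I would record how the perturbation acts on the local conditional. Writing $t^T \theta = \sum_{j} t_j^T \theta_j$ with $t_j$ the subvector of $t$ indexing the coordinates of $\theta_j$, \eq{perturbed_dens} shows that passing from $p$ to $p_t$ shifts the part of $\log p(\theta|x)$ that is linear in $\theta_j$ by exactly $t_j^T \theta_j$. Hence the local natural parameter of $p_t(\theta_j | \theta_{i: i\ne j}, x)$ is $\npp_j + t_j$, and it remains in the family of \eq{variational_exp_def} by construction. Re-running the MFVB fixed-point derivation that produced \eq{exp_approx_marg}, now for $p_t$, gives
\begin{equation*}
  \npq_{t,j} = \sum_{R \subseteq [J]\backslash\{j\}} G_R \prod_{r\in R} \mpq_{t,r} + t_j,
\end{equation*}
so that all dependence of $\npq_{t,j}$ on $t$ enters through the single additive term $t_j$.

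Next I would note that $\mpq_{t,j} = \mbe_{q_{t,j}^{*}}\theta_j$ is recovered from $\npq_{t,j}$ by the mean map $\phi_j(\npq) := \int \theta_j \exp(\npq^T \theta_j - A_j(\npq))\,d\theta_j$ of the family in \eq{variational_exp_def}, and crucially $\phi_j$ contains no $t$. Thus $M_{t,j}$ is the composition $\mpq_{t,i:i\ne j}\mapsto \npq_{t,j}\mapsto \phi_j(\npq_{t,j})$, and since $\phi_j$ is $t$-free the only route by which $t$ reaches $M_{t,j}$ is through $\npq_{t,j}$, which is the first claim. For the second claim, I would differentiate $\phi_j$ under the integral sign—justified by dominated convergence just as in \prop{dm_dt}, using $dA_j/d\npq^T = \mbe_{q_{t,j}^{*}}\theta_j^T = \mpq_{t,j}^T$—to obtain
\begin{equation*}
  \frac{dM_{t,j}}{d\npq_{t,j}^T} = \frac{d\phi_j}{d\npq_{t,j}^T}
    = \mbe_{q_{t,j}^{*}}[\theta_j\theta_j^T] - \mpq_{t,j}\mpq_{t,j}^T
    = \cov_{q_{t,j}^{*}}(\theta_j) = \Sigma_{q_{t,j}^{*}}.
\end{equation*}

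The routine part is this second claim, which is a verbatim analogue of \prop{dm_dt}. The main obstacle is the bookkeeping behind the first claim: I must verify that the perturbation enters each conditional exponential family purely as the additive shift $t_j$ of the local natural parameter and that the induced mean map $\phi_j$ inherits no $t$-dependence, so that the composition isolates all $t$-dependence inside $\npq_{t,j}$. Once that separation is in hand, both conclusions follow, and the chain rule then yields the remaining two equalities of \eq{dm_dt}: since $\partial\npq_{t,j}/\partial t_j^T = I$ we get $\partial M_t/\partial t^T = \Sigma_{q_{t}^*}$ (block diagonal across $j$), and likewise $dM_t/d\mpq_t^T = \Sigma_{q_{t}^*}\,\partial\npq_t/\partial\mpq_t^T$.
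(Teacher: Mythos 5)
Your proposal is correct and follows essentially the same route as the paper: the first claim is obtained by exhibiting $M_{t,j}$ as the exponential-family mean map applied to $\npq_{t,j}$ (you merely make explicit the additive-shift formula for $\npq_{t,j}$, which the paper defers to \eq{etat} in the proof of the next proposition), and the second claim is the same differentiation under the integral sign via dominated convergence yielding $\Sigma_{q_{t,j}^{*}}$. No gaps.
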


\begin{proof}
The first part of the lemma follows from writing the definition of $M_{t,j}$:
$$
  M_{t,j}
    = \mbe_{q_{t,j}^{*}} \theta_j
    = \int_{\theta_j}
            \theta_j \exp \left(
              \npq_{t,j}^{T} \theta_j - A_j\left(\npq_{t,j}\right)
            \right)
          d \theta_j.
$$

For the second part,
\begin{align*}
  \frac{d M_{t,j}}{d \npq_{t,j}^T}
      &= \int_{\theta_j}
          \frac{d}{d \npq_{t,j}^T}
            \theta_j \exp \left(
              \npq_{t,j}^{T} \theta_j - A_j\left(\npq_{t,j}\right)
            \right)
          d \theta_j \quad \textrm{by dominated convergence} \\
      &= \int_{\theta_j}
            \theta_j
              \left[ \theta_j^{T} - \mbe_{q_{t,j}^{*}} \theta_j^T \right]
              \exp \left(
                \npq_{t,j}^{T} \theta_j - A_j\left(\npq_{t,j}\right)
              \right)
          d \theta_j \\
      &= \Sigma_{q_{t,j}^{*}}
\end{align*}
\end{proof}

\begin{proposition}
$
  \frac{\partial M_t}{\partial t^T} = \Sigma_{q_{t}^*}.
$
\end{proposition}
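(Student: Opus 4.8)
The plan is to read $\partial M_t / \partial t^T$ as the \emph{partial} derivative that holds the mean arguments $\mpq_t$ fixed and captures only the explicit dependence of the fixed-point map on the perturbation $t$. The whole argument then reduces to two ingredients: tracking how $t$ enters the variational natural parameters, and invoking \lem{dM_deta}, which already supplies the sensitivity of each $M_{t,j}$ to its own natural parameter together with the dominated-convergence justification for differentiating under the integral.

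First I would pin down how $t$ enters. Writing $\theta$ in its $J$ subgroups, the perturbation in \eq{perturbed_dens} is $t^T\theta = \sum_{j} t_{(j)}^T \theta_j$, where $t_{(j)}$ is the block of $t$ conjugate to $\theta_j$. Hence $\log p_t(\theta_j \mid \theta_{i:i\ne j}, x)$ differs from $\log p(\theta_j\mid\theta_{i:i\ne j},x)$ only by the additive term $t_{(j)}^T\theta_j$, i.e.\ the conditional's natural parameter is shifted from $\npp_j$ to $\npp_j + t_{(j)}$. Propagating this shift through the MFVB fixed point exactly as in \eq{exp_approx_marg} shows that the variational natural parameter acquires the same additive term, $\npq_{t,j} = \sum_{R\subseteq [J]\backslash\{j\}} G_R\prod_{r\in R}\mpq_{t,r} + t_{(j)}$. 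The key consequence is that, with the mean arguments held fixed, $\partial \npq_{t,j}/\partial t^T$ is simply the selector matrix that is the identity on the columns belonging to block $j$ and zero on all other columns.

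Next I would assemble the pieces by the chain rule. By \lem{dM_deta}, $M_{t,j}$ depends on $t$ only through $\npq_{t,j}$ and $dM_{t,j}/d\npq_{t,j}^T = \Sigma_{q_{t,j}^*}$, so $\partial M_{t,j}/\partial t^T = \Sigma_{q_{t,j}^*}\,(\partial\npq_{t,j}/\partial t^T)$. Because the second factor selects block $j$, this places $\Sigma_{q_{t,j}^*}$ in the $(j,j)$ block and zeros in every off-diagonal block. Stacking the $M_{t,j}$ to form $M_t$ and differentiating with respect to $t^T$ therefore yields the block-diagonal matrix $\mathrm{blockdiag}(\Sigma_{q_{t,1}^*},\dots,\Sigma_{q_{t,J}^*})$.

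Finally I would match this to $\Sigma_{q_t^*}$. Since $q_t^*$ factorizes across the $J$ subgroups, $\theta_j$ and $\theta_{i}$ are independent under $q_t^*$ for $i\ne j$, so $\Sigma_{q_t^*}$ is itself block-diagonal with exactly the blocks $\Sigma_{q_{t,j}^*}$, which proves the claimed equality. The main thing to get right is the bookkeeping: keeping the partial derivative (means held fixed) distinct from the total derivative, and verifying that $\partial\npq_{t,j}/\partial t^T$ really is a clean block selector so that the off-diagonal blocks vanish and align with the factorized structure of $\Sigma_{q_t^*}$. No new regularity work is needed, since the analytic input is inherited from \lem{dM_deta}.
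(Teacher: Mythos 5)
Your proposal is correct and follows essentially the same route as the paper: derive $\npq_{t,j} = \sum_{R} G_{R}\prod_{r\in R}\mpq_{t,r} + t_j$ so that $\partial \npq_{t,j}/\partial t_i^T$ is the identity for $i=j$ and zero otherwise, then apply Lemma~\ref{lem:dM_deta} and assemble the blocks. Your explicit remark that the resulting block-diagonal matrix coincides with $\Sigma_{q_t^*}$ because $q_t^*$ factorizes is a slightly more careful rendering of the paper's final step $\Sigma_{q_t^*} I_D = \Sigma_{q_t^*}$, but it is the same argument.
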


\begin{proof}
By \lem{dM_deta}, we have for any indices $i$ and $j$ in $[J]$ that
\begin{equation} \label{eq:dM_dt}
  \frac{\partial M_{t,j}}{\partial t_i^T}
    = \frac{d M_{t,j}}{d \npq_{t,j}^T} \frac{\partial \npq_{t,j}}{\partial t_i^T},
\end{equation}
where the first factor is also given by \lem{dM_deta}.
It remains to find the second factor, $\frac{\partial \npq_{t,j}}{\partial t_i^T}$.
By the discussion
after \eq{variational_exp_def} and the construction of $p_t$,
the natural parameter $\npp_{t,j}$
of $p_{t}\left(\theta_{j} | \theta_{i: i \ne j}, x\right)$
satisfies
$$
  \npp_{t,j} = \sum_{R \subseteq [J]/\{j\}} G_{R} \prod_{r \in R} \theta_r + t_j.
$$
So, as in the derivation of \eq{exp_approx_marg},
the natural parameter $\npq_{t,j}$ of $q_j^*(\theta_j)$ 
satisfies
\begin{equation} \label{eq:etat}
  \npq_{t,j}
    = \sum_{R \subseteq [J]/\{j\}} G_{R} \prod_{r \in R} \mpq_{t,r} + t_j
\end{equation}
for $\mpq_{t,r} := \mbe_{q^*_{t,r}} \theta_{r}$.

Let $d_j$ be the dimension of $\theta_j$ and hence the
dimension of $\npq_{t,j}$ and $t_j$. Hence,
$$
  \frac{\partial \npq_{t,j}}{\partial t_i^T}
    = \left\{ \begin{array}{ll}
        I_{d_j} & j = i \\
        0_{d_j,d_i} & \mathrm{else}
      \end{array} \right. ,
$$
where $I_{a}$ is the identity matrix of dimension $a$, and $0_{a,b}$ is the
all zeros matrix of dimension $a \times b$.

Finally, by \eq{dM_dt}, \lem{dM_deta}, and the expression for
$\frac{\partial \npq_{t,j}}{\partial t_i^T}$ just obtained, we have
$$
  \frac{\partial M_t}{\partial t^T} = \Sigma_{q_{t}^*} I_{D} = \Sigma_{q_{t}^*}.
$$
\end{proof}

\begin{proposition}
$
  \frac{d M_t}{d m_t^T} = \Sigma_{q_{t}^*} \frac{ \partial \eta_t }{ \partial m_t^T }.
$
\end{proposition}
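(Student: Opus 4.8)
The plan is to apply the chain rule, exploiting the fact that each block $M_{t,j}$ of the fixed-point map depends on the mean parameters only through the single natural parameter $\npq_{t,j}$. Indeed, $M_{t,j} = \mbe_{q_{t,j}^{*}} \theta_j$ is the mean parameter of $q_{t,j}^{*}$ expressed as a function of its own natural parameter $\npq_{t,j}$, and by \eq{etat} that natural parameter is in turn a function of the remaining mean parameters $\mpq_{t,r}$, $r \ne j$. Composing these two dependencies, for each block index $j$ I would write
\begin{equation*}
  \frac{d M_{t,j}}{d \mpq_t^T}
    = \frac{d M_{t,j}}{d \npq_{t,j}^T} \, \frac{\partial \npq_{t,j}}{\partial \mpq_t^T},
\end{equation*}
where $\frac{d M_{t,j}}{d \mpq_t^T}$ and $\frac{\partial \npq_{t,j}}{\partial \mpq_t^T}$ denote the $j$-th block rows of $\frac{d M_t}{d \mpq_t^T}$ and $\frac{\partial \npq_t}{\partial \mpq_t^T}$, respectively. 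This parallels the chain-rule step already used in the proof of the preceding proposition.

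Next I would substitute the value of the first factor supplied by \lem{dM_deta}, namely $\frac{d M_{t,j}}{d \npq_{t,j}^T} = \Sigma_{q_{t,j}^{*}}$, giving $\frac{d M_{t,j}}{d \mpq_t^T} = \Sigma_{q_{t,j}^{*}} \, \frac{\partial \npq_{t,j}}{\partial \mpq_t^T}$ for every $j$. To reassemble these block rows into the claimed matrix identity, I would invoke that $\Sigma_{q_t^*}$ is block diagonal: since $q^{*}$ factorizes across the $j$, the covariance of $\theta$ under $q_t^*$ carries the per-block covariances $\Sigma_{q_{t,j}^{*}}$ on its diagonal blocks and zeros off the diagonal. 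Hence the $j$-th block row of the product $\Sigma_{q_t^*} \frac{\partial \npq_t}{\partial \mpq_t^T}$ collapses to exactly $\Sigma_{q_{t,j}^{*}} \frac{\partial \npq_{t,j}}{\partial \mpq_t^T}$, which matches the $j$-th block row of $\frac{d M_t}{d \mpq_t^T}$ computed above. Stacking over all $j$ then yields $\frac{d M_t}{d \mpq_t^T} = \Sigma_{q_t^*} \frac{\partial \npq_t}{\partial \mpq_t^T}$.

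The substantive content is entirely in \lem{dM_deta} together with the exponential-family structure that makes $M_{t,j}$ factor through $\npq_{t,j}$; everything else is bookkeeping, so the main thing to get right is the block structure. Two points deserve care: first, that $M_{t,j}$ depends on $\mpq_t$ solely via its own $\npq_{t,j}$, so the chain rule involves $\npq_{t,j}$ alone rather than the full stacked vector $\npq_t$; and second, that the block-diagonality of $\Sigma_{q_t^*}$ is precisely what lets the per-block chain rule be repackaged as a product of full matrices. As a consistency check (not needed for the identity), I would note that \eq{etat} forces the diagonal blocks of $\frac{\partial \npq_t}{\partial \mpq_t^T}$ to vanish, since $\npq_{t,j}$ does not depend on $\mpq_{t,j}$.
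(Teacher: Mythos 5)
Your proposal is correct and follows essentially the same route as the paper's own proof: a blockwise chain rule through $\npq_{t,j}$ combined with Lemma~\ref{lem:dM_deta}, with the block-diagonality of $\Sigma_{q_t^*}$ doing the final reassembly. The paper states this in two lines and leaves the block bookkeeping implicit; you have simply filled in those details, and your closing observation that the diagonal blocks of $\partial \npq_t / \partial \mpq_t^T$ vanish is a correct (and unneeded) consistency check.
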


\begin{proof}
By \lem{dM_deta} and analogous to \eq{dM_dt}, we have
\begin{equation} \label{eq:dM_dm}
  \frac{\partial M_{t,j}}{\partial m_{t,i}^T}
    = \frac{d M_{t,j}}{d \npq_{t,j}^T} \frac{\partial \npq_{t,j}}{\partial m_{t,i}^T}.
\end{equation}
The result follows immediately from \lem{dM_deta}.
\end{proof}

\section{Multivariate normal} \label{app:SEM}

For any target distribution $p(\theta | x)$, it is well-known that MFVB cannot be used
to estimate the covariances between the components of $\theta$.
In particular, if $q^*$ is the estimate
of $p(\theta | x)$ returned by MFVB,
$q^*$ will have a block-diagonal covariance matrix---no matter the form
of the covariance of $p(\theta | x)$. 
By contrast, the next result shows that the LRVB covariance estimate is exactly correct in the 
case where the target distribution, $p(\theta|x)$, is (multivariate) normal.

In order to prove this result, we will rely on the following lemma.
\begin{lemma} \label{lem:lrvb_mvn}
  Consider a target posterior distribution characterized by $p(\theta | x) = \gauss(\theta | \mu, \Sigma)$,
  where $\mu$ and $\Sigma$ may depend on $x$, and $\Sigma$ is invertible.
  Let $\theta = (\theta_{1}, \ldots, \theta_{J})$,
  and consider a MFVB approximation to $p(\theta| x)$ that factorizes as $q(\theta) = \prod_{j} q(\theta_j)$.
  Then the variational posterior means are the true posterior means; i.e. $m_j = \mu_j$ for all $j$ between 
  $1$ and $J$.
\end{lemma}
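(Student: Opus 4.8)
The plan is to exploit the fact that the mean-field fixed-point equations \eq{kl_div} turn a quadratic log-density into \emph{Gaussian} factors, and then to read off a linear system for the variational means that forces them to coincide with the true means. The key simplification is that, because $\log p(\theta|x)$ is quadratic, taking the variational expectation over the remaining blocks only ever touches terms that are \emph{linear} in those blocks, so higher moments of the $q_{k}^{*}$ never enter the update for $m_j$.

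First I would set $\Lambda := \Sigma^{-1}$ and expand, in the block partition $\theta = (\theta_{1},\dots,\theta_{J})$ with $\Lambda_{jk}$ the $(j,k)$ block of $\Lambda$,
$$\log p(\theta|x) = -\tfrac12 \sum_{j,k} (\theta_j - \mu_j)^{T} \Lambda_{jk} (\theta_k - \mu_k) + \mathrm{constant}.$$
Collecting the terms that involve $\theta_j$ and using the symmetry $\Lambda_{jk} = \Lambda_{kj}^{T}$, the $\theta_j$-dependence is
$$-\tfrac12 (\theta_j - \mu_j)^{T} \Lambda_{jj} (\theta_j - \mu_j) - \sum_{k \ne j} (\theta_j - \mu_j)^{T} \Lambda_{jk} (\theta_k - \mu_k).$$
Substituting into \eq{kl_div} and taking $\mbe_{q_{i}^{*}: i \ne j}$ replaces each off-block factor $\theta_k - \mu_k$ by $m_k - \mu_k$ (this is exactly where the product form $q(\theta) = \prod_j q(\theta_j)$ is used), while the quadratic term in $\theta_j$ is unchanged. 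Hence each $\log q_{j}^{*}(\theta_j)$ is, up to an additive constant,
$$-\tfrac12 (\theta_j - \mu_j)^{T} \Lambda_{jj} (\theta_j - \mu_j) - (\theta_j - \mu_j)^{T} \sum_{k \ne j} \Lambda_{jk} (m_k - \mu_k),$$
so $q_{j}^{*}$ is Gaussian with precision $\Lambda_{jj}$.

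Next I would extract the mean. Since $q_{j}^{*}$ is Gaussian, its mean $m_j$ coincides with the mode of the quadratic above; differentiating in $\theta_j$ and setting the gradient to zero at $\theta_j = m_j$ gives $\Lambda_{jj}(m_j - \mu_j) + \sum_{k \ne j} \Lambda_{jk}(m_k - \mu_k) = 0$, i.e. $\sum_{k} \Lambda_{jk}(m_k - \mu_k) = 0$ for every $j$. Stacking these over $j$ yields the single matrix equation $\Lambda (m - \mu) = 0$, and since $\Sigma$ (hence $\Lambda$) is invertible by hypothesis, $m = \mu$, which is precisely the claim $m_j = \mu_j$ for all $j$.

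The calculation is essentially routine once the Gaussianity of each $q_{j}^{*}$ is established; the only points needing care are (i) tracking the symmetric block structure of $\Lambda$ so that the cross terms combine with the right coefficient, and (ii) observing that the argument shows $m = \mu$ is the \emph{unique} solution of the fixed-point system, so the means are pinned down regardless of which fixed point the MFVB iteration reaches. I do not anticipate a genuine obstacle beyond this bookkeeping, since invertibility of $\Sigma$ does all the work at the final step.
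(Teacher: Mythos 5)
Your proposal is correct and follows essentially the same route as the paper: establish that each $q_j^*$ is Gaussian with precision $\Lambda_{jj}$, derive the linear fixed-point equation $\sum_k \Lambda_{jk}(m_k - \mu_k) = 0$ for the means, stack over $j$ to obtain $\Lambda(m-\mu)=0$, and conclude $m=\mu$ from invertibility of $\Lambda$. The only cosmetic difference is that you expand the joint log-density directly rather than first writing the conditional $p(\theta_j \mid \theta_{i:i\ne j},x)$, which leads to the identical fixed-point system.
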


\begin{proof}
  The derivation of MFVB for the multivariate normal can be found in Section 10.1.2 of
  \cite{bishop:2006:pattern}; we highlight some key results here.
  Let $\Lambda = \Sigma^{-1}$. Let the $j$ index on a row or column correspond to $\theta_j$,
  and let the $-j$ index
  correspond to $\{\theta_{i}: i \ne j\}$. E.g., for $j=1$,
  $$
    \Lambda
      = \left[ \begin{array}{ll}
          \Lambda_{11} & \Lambda_{1,-1} \\
          \Lambda_{-1,1} & \Lambda_{-1,-1}
        \end{array} \right].
  $$
  By the assumption that $p(\theta | x) = \gauss(\theta | \mu, \Sigma)$, we have
\begin{equation}\label{eq:mvn_variational_dist}
    \log p(\theta_{j} | \theta_{i: i \ne j}, x)
      = -\frac{1}{2} (\theta_{j} - \mu_{j})^{T} \Lambda_{jj} (\theta_j - \mu_j) +
         (\theta_{j} - \mu_{j})^{T} \Lambda_{j,-j} (\theta_{-j} - \mu_{-j}) + \mathrm{constant},
\end{equation}

  where the final term is constant in $\theta_{j}$.
  It follows that
  \begin{align*}
    \log q^{*}_{j}(\theta_j)
      &= \mbe_{q^{*}_{i}: i \ne j} \log p(\theta, x) + \mathrm{constant} \\
      &= -\frac{1}{2} \theta_{j}^{T} \Lambda_{jj} \theta_j + \theta_j \mu_j \Lambda_{jj} - \theta_j \Lambda_{j,-j} (\mbe_{q^{*}} \theta_{-j} - \mu_{-j}).
  \end{align*}
  So 
  \begin{equation*}
    q^*_j(\theta_j) = \gauss(\theta_j | m_{j}, \Lambda_{jj}^{-1}),
  \end{equation*}
  with mean parameters
  \begin{equation} \label{eq:mvn_stable_point}
    m_{j} = \mbe_{q^{*}_j} \theta_j = \mu_{j} - \Lambda_{jj}^{-1} \Lambda_{j,-j} (m_{-j} - \mu_{-j})
  \end{equation}
  as well as an equation for $\mbe_{q^{*}} \theta^T \theta$.

Note that $\Lambda_{jj}$ must be invertible, for if it
were not, $\Sigma$ would not be invertible.

The solution $m = \mu$ is a unique stable point for
\eq{mvn_stable_point}, since the fixed point equations for each $j$
can be stacked and rearranged to give
\begin{eqnarray*}
m-\mu & = & -\left[\begin{array}{ccccc}
0 & \Lambda_{11}^{-1}\Lambda_{12} & \cdots & \Lambda_{11}^{-1}\Lambda_{1\left(J-1\right)} & \Lambda_{11}^{-1}\Lambda_{1J}\\
\vdots &  & \ddots &  & \vdots\\
\Lambda_{JJ}^{-1}\Lambda_{J1} & \Lambda_{JJ}^{-1}\Lambda_{J2} & \cdots & \Lambda_{JJ}^{-1}\Lambda_{J\left(J-1\right)} & 0
\end{array}\right]\left(m-\mu\right)\\
 & = & -\left[\begin{array}{ccccc}
\Lambda_{11}^{-1} & \cdots & 0 & \cdots & 0\\
\vdots & \ddots &  &  & \vdots\\
0 &  & \ddots &  & 0\\
\vdots &  &  & \ddots & \vdots\\
0 & \cdots & 0 & \cdots & \Lambda_{JJ}^{-1}
\end{array}\right]\left[\begin{array}{ccccc}
0 & \Lambda_{12} & \cdots & \Lambda_{1\left(J-1\right)} & \Lambda_{1J}\\
\vdots &  & \ddots &  & \vdots\\
\Lambda_{J1} & \Lambda_{J2} & \cdots & \Lambda_{J\left(J-1\right)} & 0
\end{array}\right]\left(m-\mu\right)\Leftrightarrow\\
0 & = & \left[\begin{array}{ccccc}
\Lambda_{11} & \cdots & 0 & \cdots & 0\\
\vdots & \ddots &  &  & \vdots\\
0 &  & \ddots &  & 0\\
\vdots &  &  & \ddots & \vdots\\
0 & \cdots & 0 & \cdots & \Lambda_{JJ}
\end{array}\right]\left(m-\mu\right) +\\
&& \left[\begin{array}{ccccc}
0 & \Lambda_{12} & \cdots & \Lambda_{1\left(J-1\right)} & \Lambda_{1J}\\
\vdots &  & \ddots &  & \vdots\\
\Lambda_{J1} & \Lambda_{J2} & \cdots & \Lambda_{J\left(J-1\right)} & 0
\end{array}\right]\left(m-\mu\right)\Leftrightarrow\\
0 & = & \Lambda \left(m-\mu\right) \Leftrightarrow\\
m & = & \mu.
\end{eqnarray*}
The last step follows from the assumption that $\Sigma$ (and hence $\Lambda$)
is invertible.  It follows that $\mu$ is the unique stable point of
\eq{mvn_stable_point}.

\end{proof}

\begin{proposition} \label{prop:lrvb_mvn}
  Assume we are in the setting of \lem{lrvb_mvn}, where additionally
  $\mu$ and $\Sigma$ are on the interior of the feasible parameter space.
  Then the LRVB covariance estimate exactly captures the true covariance,
  $\hat{\Sigma} = \Sigma$.


\end{proposition}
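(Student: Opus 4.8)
The plan is to compute the LRVB estimate $\hat{\Sigma} = (I - \Sigma_{q^*} H)^{-1}\Sigma_{q^*}$ in closed form from the explicit MFVB solution supplied by \lem{lrvb_mvn} and to check that it collapses exactly to $\Sigma$. Throughout I write $\Lambda := \Sigma^{-1}$ for the precision matrix and let $L$ denote its block diagonal, i.e.\ the matrix whose $j$th diagonal block is $\Lambda_{jj}$ and whose off-diagonal blocks vanish. The two ingredients I need are the variational covariance $\Sigma_{q^*}$ and the Jacobian $H = \partial \npq_t / \partial \mpq_t^T$ evaluated at $t = 0$; once both are expressed through the blocks of $\Lambda$, the claim should reduce to a one-line matrix identity.

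First I would read off $\Sigma_{q^*}$. \lem{lrvb_mvn} shows that each variational factor is $q^*_j = \gauss(\theta_j \mid m_j, \Lambda_{jj}^{-1})$, so the covariance of $\theta_j$ under $q^*_j$ is $\Lambda_{jj}^{-1}$; because $q^*$ factorizes, $\Sigma_{q^*}$ is block diagonal with exactly these blocks, giving $\Sigma_{q^*} = L^{-1}$ (each $\Lambda_{jj}$ is invertible, as already noted in the proof of the lemma).

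Next I would compute $H$. By \eq{etat}, the natural parameter of $q^*_j$ is the mean, taken over the remaining factors, of the conditional natural parameter of $p(\theta_j \mid \theta_{i:i\ne j}, x)$; reading off the stable-point relation \eq{mvn_stable_point} and multiplying through by $\Lambda_{jj}$ shows that this natural parameter is affine in the other coordinates, $\npq_j = \Lambda_{jj}\mu_j - \Lambda_{j,-j}(m_{-j} - \mu_{-j})$. Since $\Lambda_{j,-j}(m_{-j}-\mu_{-j}) = \sum_{i\ne j}\Lambda_{ji}(m_i-\mu_i)$, this yields $\partial \npq_j / \partial \mpq_i^T = -\Lambda_{ji}$ for $i \ne j$ while $\partial \npq_j / \partial \mpq_j^T = 0$, because in the parameterization of \eq{etat} the parameter $\npq_j$ depends only on the \emph{other} blocks' means. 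Stacking over all blocks then gives the compact form $H = L - \Lambda$.

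With both pieces in hand, substitution into \eq{lrvb_est} gives $I - \Sigma_{q^*} H = I - L^{-1}(L - \Lambda) = L^{-1}\Lambda$, which is invertible as a product of invertible matrices (this simultaneously discharges the existence of the inverse appearing in the LRVB formula), and therefore $\hat{\Sigma} = (L^{-1}\Lambda)^{-1} L^{-1} = \Lambda^{-1} L L^{-1} = \Lambda^{-1} = \Sigma$. The one step needing genuine care is the identification of $H$: one must use that, in the fixed-point parameterization of \eq{etat}, $\npq_j$ is a function of the other blocks' means only, so that the diagonal blocks of $H$ vanish while the off-diagonal blocks are exactly the corresponding off-diagonal precision blocks, with the correct sign. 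Once the signs and the block structure are pinned down the remaining algebra telescopes, and the interior-of-the-parameter-space hypothesis is needed only to justify the differentiation underlying \eq{lrvb_est} through the propositions of \app{lr_derivs}.
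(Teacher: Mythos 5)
Your computation is correct, but it proves the proposition by a genuinely different route than the paper. The paper's proof is conceptual: it observes that the perturbed posterior $p_t$ is still multivariate normal (with the same covariance and a shifted mean), so \lem{lrvb_mvn} applies at every $t$ and gives $\mpq_t = \mu_t$ exactly; hence the single approximate step in the derivation of \eq{lrvb_est}, namely $\frac{d\mpq_t}{dt^T} \approx \Sigma_{p_t}$, becomes an exact equality and the whole chain of \eq{dm_dt}--\eq{lrvb_est} yields $\hat\Sigma = \Sigma$ with no further calculation. You instead evaluate the LRVB formula in closed form: $\Sigma_{q^*} = L^{-1}$ with $L$ the block diagonal of $\Lambda = \Sigma^{-1}$, and $H = L - \Lambda$ (read off from $\npq_j = \Lambda_{jj}\mu_j - \Lambda_{j,-j}(m_{-j}-\mu_{-j})$, so the diagonal blocks of $H$ vanish and the off-diagonal blocks are $-\Lambda_{ji}$), whence $I - \Sigma_{q^*}H = L^{-1}\Lambda$ and $\hat\Sigma = \Lambda^{-1}LL^{-1} = \Sigma$. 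Your approach buys something the paper's proof elides: it exhibits $I - \Sigma_{q^*}H$ explicitly as a product of invertible matrices, so the inverse appearing in \eq{lrvb_est} is shown to exist rather than assumed, and it does not lean on the somewhat circular closing sentence of the paper's proof (``it follows from the arguments in \app{SEM}''). The paper's argument, in exchange, makes transparent \emph{why} the result holds---the LRVB approximation is exact precisely because the mean-tracking assumption is exact for Gaussians---and generalizes more readily. One small gap in your write-up: each variational factor also carries quadratic sufficient statistics $\theta_j\otimes\theta_j$, so $H$ and $\Sigma_{q^*}$ as defined in \eq{lrvb_est} are larger matrices than the ones you manipulate; to restrict the computation to the linear block you should invoke \lem{mvn_second_moments} (or note that the quadratic natural parameters are constant in $m$, making the relevant blocks of $H$ zero and the system block-triangular). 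With that citation added, your argument is complete.
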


\begin{proof}

  Consider the perturbation for LRVB defined in \eq{perturbed_dens}.
  By perturbing the log likelihood, we change both the true means $\mu_t$
  and the variational solutions, $m_t$. The result is a valid
  density function since the original $\mu$ and $\Sigma$ are on the
  interior of the parameter space.
  By \lem{lrvb_mvn}, the MFVB solutions are exactly the true
  means, so $m_{t,j} = \mu_{t,j}$, and the derivatives are the same
  as well.  This means that the first term in \eq{lrvb_est} is
  not approximate, i.e.
  \begin{equation*}
  \frac{d \mpq_{t}}{d t^{T}}
    = \frac{d}{d t^{T}} \mbe_{p_{t}} \theta
    = \Sigma_{p_{t}},
  \end{equation*}
  It follows from the arguments in \app{SEM} that the LRVB covariance
  matrix is exact, and $\hat{\Sigma} = \Sigma$.
  
\end{proof}

One final result about the multivariate normal will simplify
some of the leverage score calculations to follow.  
The variational distribution in \eq{mvn_variational_dist}
has both linear and quadratic sufficient statistics for
$\theta_{j}$.  That is, the full set of variational parameters
are $\tilde\theta_{j} := \left(\theta_{j}^T, s_{j}^T\right)^T$, where
$s_{j} = \textrm{Vec}\left( \theta_j \otimes \theta_j\right)$.
(The $\textrm{Vec}$ operator stacks a matrix columnwise into a vector,
and $\otimes$ denotes the Kronecker product.)

Stictly speaking, \eq{lrvb_est} requires calculating derivatives
for all the sufficient statistics, not just the statistics
that we are interested in.  However, when calculating
the LRVB covariance for the mean of a normal distribution, we
can effectively ignore the $s$ terms and apply \eq{lrvb_est}
only to the $\theta$ terms.

\begin{lemma}\label{lem:mvn_second_moments}
The LRVB covariance matrix for the mean of a multivariate
normal distribution does not depend on the sensitivities
to the quadratic sufficient statistics.  Specifically,
\begin{equation}\label{eq:mvn_second_moments}
\hat\Sigma_\theta =
    \left(I - \frac{\partial M_\theta}{\partial m_\theta}\right)^{-1}
    \Sigma_{q^*,\theta}
\end{equation}
\end{lemma}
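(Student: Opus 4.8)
The plan is to work in the full coordinate system given by the combined sufficient statistics $\tilde\theta = (\theta^T, s^T)^T$ and to show that, in the normal case, the Jacobian of the mean-parameter fixed-point map is block lower triangular with respect to the split into the linear block ($\theta$) and the quadratic block ($s$). Writing \eq{lrvb_est} over these combined coordinates, $\hat\Sigma = (I - \Sigma_{q^*}H)^{-1}\Sigma_{q^*}$, and recalling from \eq{dm_dt} that $\Sigma_{q^*}H = dM/dm^T$ is exactly this Jacobian, the whole lemma reduces to extracting the $\theta\theta$ block of this matrix product once the triangular structure is in hand.

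First I would establish the key structural fact: the variational update for the mean depends only on the means and not on the second moments. This is read directly off \eq{mvn_stable_point}, where the update $m_j = \mu_j - \Lambda_{jj}^{-1}\Lambda_{j,-j}(m_{-j} - \mu_{-j})$ involves the other mean parameters $m_{-j}$ but no second moments; equivalently, the natural parameter of $q_j^*$ on the linear statistic $\theta_j$ is an affine function of the means $m_{-j}$ alone, while the natural parameter on the quadratic statistic is the constant $-\tfrac{1}{2}\Lambda_{jj}$. Hence $M_\theta$ is a function of $m_\theta$ only, so $\partial M_\theta / \partial m_s = 0$.

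Next I would translate this into the block form of the Jacobian. Since $\Sigma_{q^*}H = dM/dm^T$, the vanishing of $\partial M_\theta/\partial m_s$ says precisely that the top-right $(\theta, s)$ block of $\Sigma_{q^*}H$ is zero, so $I - \Sigma_{q^*}H$ is block lower triangular with diagonal blocks $I - \partial M_\theta/\partial m_\theta$ and $I - \partial M_s/\partial m_s$. The inverse of a block lower triangular matrix is again block lower triangular, with top-left block equal to $(I - \partial M_\theta/\partial m_\theta)^{-1}$ and top-right block equal to zero. Forming the product $(I - \Sigma_{q^*}H)^{-1}\Sigma_{q^*}$ and reading off its $\theta\theta$ entry, the only term that would couple in the $s$ statistics is multiplied by this zero top-right block and drops out, leaving exactly $\hat\Sigma_\theta = (I - \partial M_\theta/\partial m_\theta)^{-1}\Sigma_{q^*,\theta}$, which is \eq{mvn_second_moments}.

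The main obstacle is the first step rather than the linear algebra: one must be sure that the mean update genuinely decouples from the quadratic statistics, i.e. that $M_\theta$ carries no hidden dependence on $m_s$ through the variational natural parameters. I would also want to state carefully that the triangularity is a property of the product $\Sigma_{q^*}H = dM/dm^T$ and not of $H$ or $\Sigma_{q^*}$ separately, since the off-diagonal blocks $H_{\theta s}$ and $\Sigma_{\theta s}$ need not vanish. A minor point to dispatch is invertibility of the diagonal block $I - \partial M_\theta/\partial m_\theta$: it follows from invertibility of $I - \Sigma_{q^*}H$ (used in \prop{lrvb_mvn}) together with the block-triangular structure, since a block lower triangular matrix is invertible iff its diagonal blocks are.
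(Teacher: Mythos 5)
Your proof is correct and takes essentially the same route as the paper's: both arguments come down to the observation that the top-right $(\theta,s)$ block of $\Sigma_{q^*}H = dM/dm^T$ vanishes (the paper reads the sparsity of $H$ directly off \eq{mvn_variational_dist}, while you read the decoupling of the mean update off \eq{mvn_stable_point}), after which one inverts the resulting block lower triangular matrix and extracts the $\theta\theta$ corner of $(I-\Sigma_{q^*}H)^{-1}\Sigma_{q^*}$. The only differences are cosmetic---the paper asserts $H_{\theta s}=0$ outright whereas you more cautiously locate the zero in the product, and you add a harmless remark on invertibility of the diagonal block.
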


\begin{proof}

We will evaluate the terms in \eq{lrvb_est} for the full
parameter vector $\tilde\theta$, and show that the
submatrix of $\hat\Sigma$ corresponding to $\theta$ is
given by \eq{mvn_second_moments}.

Partition the matrices $H$ and $\Sigma_{q^*}$ from
\eq{lrvb_est} into blocks for
$\theta$ and $s$.  We will use $V_\theta$ and $V_s$ to denote
the variational variance of $\theta$ and $s$, and $V_{\theta s}$
to denote the variational covariance between $\theta$ and $s$.
All the $V$ terms are given by standard properties of the multivariate
normal distribution.
\begin{eqnarray*}
H & = & \left(\begin{array}{cc}
\frac{\partial\eta_{\theta}}{\partial m_\theta} &
    \frac{\partial\eta_{\theta}}{\partial m_s}\\
\frac{\partial\eta_{s}}{\partial m_\theta} & 0
\end{array}\right)=\left(\begin{array}{cc}
\frac{\partial\eta_{\theta}}{\partial m_\theta} & 0\\
0 & 0
\end{array}\right)\\
\Sigma_{q^{*}} & = & \left(\begin{array}{cc}
V_{\theta} & V_{\theta s}\\
V_{\theta s}^{T} & V_{s}
\end{array}\right)
\end{eqnarray*}

In the formula for $H$, we have used the observation from
\eq{mvn_variational_dist} that the terms of $s$
never co-occur with any other terms of $\tilde\theta$, so that
$\frac{\partial \eta_s}{\partial m_\theta^T} = \frac{\partial \eta_\theta^T}{\partial m_s} = 0$.

First, we calculate:

\begin{eqnarray*}
\left(I-\Sigma_{q^{*}}H\right) & = & I-\left(\begin{array}{cc}
V_{\theta} & V_{\theta s}\\
V_{\theta s}^{T} & V_{s}
\end{array}\right)\left(\begin{array}{cc}
\frac{\partial\eta_{\theta}}{\partial m_\theta} & 0\\
0 & 0
\end{array}\right)\\
 & = & \left(\begin{array}{cc}
I-V_{\theta}\frac{\partial\eta_{\theta}}{\partial m_\theta} & 0\\
-V_{\theta s}^{T}\frac{\partial\eta_{\theta}}{\partial m_\theta} & I
\end{array}\right)
\end{eqnarray*}

Using the Schur inverse and the fact that the upper right hand
corner is the $0$ matrix, we can write

\begin{eqnarray*}
\left(I-\Sigma_{q^{*}}H\right)^{-1} & = & \left(\begin{array}{cc}
\left(I-V_{\theta}\frac{\partial\eta_{\theta}}{\partial m_\theta}\right)^{-1} & 0\\
Q_{\theta s} & Q_{ss}
\end{array}\right),
\end{eqnarray*}
where $Q_{\theta s}$ and $Q_{ss}$ are simply placeholders for the rest of the
inverse.  Multiplying by $\Sigma_{q^*}$ gives that the $\theta$-sized
upper-left corner of $\hat\Sigma$ is
\begin{eqnarray*}
\hat{\Sigma}_{\theta\theta} & = & \left(I-V_{\theta}\frac{\partial\eta_{\theta}}{\partial m_\theta}\right)^{-1} V_\theta
\end{eqnarray*}

This is the same as \eq{mvn_second_moments} and identical to what
we would have gotten by applying \eq{lrvb_est} to $\theta$ alone,
ignoring the $s$ dependence.

\end{proof}

\subsection{Comparison with supplemented expectation-maximization}\label{subsec:SEM}

This result about the multivariate normal distribution
draws a connection between LRVB
corrections and the ``supplemented expectation-maximization'' (SEM)
method of \cite{meng:1991:using}.  SEM is an asymptotically
exact covariance correction for the EM algorithm that transforms
the full-data Fisher information matrix into the observed-data Fisher
information matrix using a correction that is formally similar to
\eq{lrvb_est}.  In this section, we argue that this similarity
is not a coincidence; in fact the SEM correction is an
asymptotic version of LRVB with two variational blocks,
one for the missing data and one for the unknown parameters.

Although LRVB as described here requires a prior 
(unlike SEM, which supplements the MLE),
the two covariance corrections coincide when
the full information likelihood is approximately log quadratic
and proportional to the posterior, $p(\theta \vert x)$.
This might be expected to occur when we have a large number
of independent data points informing each parameter---i.e.,
when a central limit theorem applies and the priors do not
affect the posterior.
In the full information likelihood, some
terms may be viewed as missing data, whereas in the Bayesian
model the same terms may be viewed as latent parameters,
but this does not prevent us from formally comparing the two methods.

We can draw a term-by-term analogy with
the equations in \cite{meng:1991:using}. We denote variables
from the SEM paper with a superscript ``$SEM$'' to avoid confusion.
MFVB does not differentiate between missing
data and parameters to be estimated, so our $\theta$ corresponds to
$(\theta^{SEM}, Y_{mis}^{SEM})$ in \cite{meng:1991:using}.
SEM is an asymptotic
theory, so we may assume that $(\theta^{SEM}, Y_{mis}^{SEM})$ have a
multivariate normal
distribution, and that we are interested in the mean and covariance of
$\theta^{SEM}$.

In the E-step of \cite{meng:1991:using}, we replace $Y_{mis}^{SEM}$ with
its conditional expectation given the data and other $\theta^{SEM}$.
This corresponds precisely to \eq{mvn_stable_point}, taking
$\theta_j = Y_{mis}^{SEM}$.  In the M-step, we find the maximum
of the log likelihood with respect to $\theta^{SEM}$, keeping
$Y_{mis}^{SEM}$ fixed at its expectation.  Since the mode
of a multivariate normal distribution is also its mean,
this, too, corresponds to \eq{mvn_stable_point}, now taking
$\theta_j = \theta^{SEM}$.

It follows that the MFVB and EM fixed point equations are the same;
i.e., our $M$ is the same as their $M^{SEM}$, and
our $\partial M / \partial m$ of \eq{dm_dt} corresponds
to the transpose of their $DM^{SEM}$, defined in \eqw{2.2.1}
of \cite{meng:1991:using}.  Since the ``complete information'' corresponds to
the variance of $\theta^{SEM}$ with fixed values for $Y_{OBS}^{SEM}$,
this is the same as our $\Sigma_{q^*,11}$, the variational covariance,
whose inverse is $I_{oc}^{-1}$.  Taken all together, this means that
equation (2.4.6) of \cite{meng:1991:using} can be
re-written as our \eq{lrvb_est}.
\begin{align*}
V^{SEM} =& I_{oc}^{-1} \left(I - DM^{SEM}\right)^{-1} \Rightarrow\\
\Sigma =& \Sigma_{q^*} \left(I - \left(\frac{\partial M}{\partial m^T}\right)^T \right)^{-1}
       = \left(I - \frac{\partial M}{\partial m^T} \right)^{-1} \Sigma_{q^*}
\end{align*}

\section{Leverage scores} \label{app:leverage}

In a linear model $y_i = \beta^T x_i + \epsilon$, leverage score estimates how much
influence each observation $x_i$ has on its fitted value, $\hat{y_i} = \hat\beta x_i$,
through its influence on $\hat\beta$.  In an analogous Bayesian way, we can
use LRVB to estimate the correlation between infinitesimal noise in our observed
data and our posterior estimates of $\theta = \left(\mu, \sigma, p\right)$ in
the model of \mysec{experiments} .

In this appendix, we first show that covariance-based ``leverage scores'' described
in \mysec{normal_sensitivity} are the same as classical leverage scores for
linear models.  Then, we derive the leverage scores for the means of a
normal mixture model.

\subsection{Linear model leverage scores}

Let us define a classical linear regression with known variance as
\begin{eqnarray*}
y_{i} & \sim & \gauss \left(\beta^{T}x_{i}, \sigma^{2}\right)\\
\log p\left(Y | \beta \right) & = & 
   -\frac{1}{2\sigma^{2}}\beta^{T}X^{T}X\beta+\frac{1}{\sigma^{2}}Y^{T}X\beta + \mathrm{constant}.
\end{eqnarray*}

Here, in order to take advantage of familiar matrix formulas for linear regression,
we will use capital letters to denote vectors and matrices in this section.
That is, $Y$ is the vector of scalars $y_{i}$, $X$ is the matrix
formed by stacking the observations $x_{i}^{T}$. To recover leverage
scores, suppose that instead of $y_i$, we observe normal random variables $y_i^{*}$, where:
\begin{eqnarray*}
E(y_{i}^{*} \vert y_{i}) & = & y_{i}\\
Var(y_{i}^{*} \vert y_{i}) & = & \epsilon.
\end{eqnarray*}

The variables $y_i$ and $y_i^*$ are analogous to the variables $x_i$ and $x_i^{*}$
of \mysec{normal_sensitivity}, respectively.
We will then use MFVB to fit this model where the parameters to be estimated
are $\theta = (Y^{T}, \beta^{T})^{T}$
and we have a uniform improper prior on $\beta$.
Since the posterior is multivariate
normal, in this case the LRVB covariance matrices for $\theta$ will be
exact in light of \app{SEM}.

The sufficient statistics for $Y$ include quadratic terms, $y_i^2$,
that are correlated with the linear sufficient statistics.  Ordinarily,
one must also include derivatives with respect to these quadratic
sufficient statistics when applying \eq{lrvb_est} (as is done in
\mysec{mixture_leverage}).  However, since the posterior is
multivariate normal, we can apply \lem{mvn_second_moments} and only
consider the sensitivity to $Y$.

The terms in \eq{lrvb_est} are given by:
\begin{eqnarray*}
\Sigma_{q^*_Y} & = & \var(Y|\beta, Y^{*}) = \epsilon I_{Y}\\
\Sigma_{q^*_\beta} & = & \var(\beta|Y, Y^{*})  = \left(X^{T}X\right)^{-1}\sigma^{2}\\
\frac{\partial \eta_{\beta}}{\partial m_Y^T} & = & \frac{1}{\sigma^{2}}X^{T}\\
\frac{\partial \eta_{Y}}{\partial m_\beta^T} & = & \frac{1}{\sigma^{2}}X\Rightarrow\\
I- \Sigma_{q^*} H & = & \left(\begin{array}{cc}
I_{\beta} & -\sigma^{2}\left(X^{T}X\right)^{-1}X^{T}\\
    -\frac{\epsilon}{\sigma^{2}}X & I_{Y}
\end{array}\right).
\end{eqnarray*}

The upper-left ($\beta$) component of $(I - \Sigma_{q^*} H )^{-1}$ can be calculated
with the Schur complement:
\begin{eqnarray*}
\left(I - \Sigma_{q^*} H \right)_{\beta\beta}^{-1} & = &
    \left(I_{\beta}-\frac{\epsilon}{\sigma^{2}}\left(X^{T}X\right)^{-1}X^{T}X\right)^{-1}\\
 & = & \left(1-\frac{\epsilon}{\sigma^{2}}\right)^{-1}I_{\beta}\\
 & \equiv & \alpha I_{\beta},
\end{eqnarray*}
where we have defined $\alpha=\sigma^{2}\left(\sigma^{2}-\epsilon\right)^{-1}$.
Note that $\lim_{\epsilon\rightarrow0}\alpha=1$. This gives the rest
of the inverse and the covariance between $Y$ and $\beta$:
\begin{eqnarray*}
\left(I - \Sigma_{q^*} H \right)^{-1} & = & \left(\begin{array}{cc}
\alpha I_{\beta} & \alpha\left(X^{T}X\right)^{-1}X^{T}\\
\alpha \frac{\epsilon}{\sigma^{2}}X & \left(I_{Y}-\frac{\epsilon}{\sigma^{2}}P_{X}\right)^{-1}
\end{array}\right)\\
\Sigma_p & = & \left(\begin{array}{cc}
  \alpha I_{\beta} & \alpha\left(X^{T}X\right)^{-1}X^{T}\\
  \alpha \frac{\epsilon}{\sigma^{2}}X & \left(I_{Y}-\frac{\epsilon}{\sigma^{2}}P_{X}\right)^{-1}
  \end{array}\right)\left(\begin{array}{cc}
  \sigma^{2}\left(X^{T}X\right)^{-1} & 0\\
  0 & \epsilon I_{Y}
  \end{array}\right)\\
 & = & \left(\begin{array}{cc}
    \alpha\sigma^{2}\left(X^{T}X\right)^{-1} &
    \epsilon\alpha\left(X^{T}X\right)^{-1}X^{T}\\
    \epsilon\alpha X\left(X^{T}X\right)^{-1} &
    \epsilon\left(I_{Y}-\frac{\epsilon}{\sigma^{2}}P_{X}\right)^{-1}
\end{array}\right),
\end{eqnarray*}
where $P_{X}=X\left(X^{T}X\right)^{-1}X^{T}$ is the projection matrix
onto $X$. This says that
\begin{eqnarray*}
\cov(\beta,Y \vert Y^{*}) & = & \epsilon\alpha\left(X^{T}X\right)^{-1}X^{T}\Rightarrow\\
\cov(\hat{Y},Y\vert Y^{*}) & = & Cov(X\beta,Y)=XCov(\beta,Y)\\
 & = & \epsilon\alpha P_{X}.
\end{eqnarray*}

Since $\epsilon\rightarrow0\Rightarrow\alpha\rightarrow1$, the covariance
between $\hat{y}_{i}$ and $z_{i}$ is proportional to the diagonal
of $P_{X}$, which is exactly the classical leverage score.

\subsection{Normal mixture leverage scores} \label{sec:mixture_leverage}

We now consider leverage scores in the setting of \mysec{normal_sensitivity},
The new posterior with perturbed $x$ observations is the original
posterior plus a term for $x^*$:
\begin{eqnarray*}
\log p\left(\mu, \sigma, a, z, x | x^* \right) & = &
    \log p\left(\mu, \sigma, a, z | x \right) +  \log p\left(x | x^* \right) + \mathrm{constant} \\
& = & \log p\left(\mu, \sigma, a, z | x \right) - 
      \frac{1}{2}\sigma_{x}^{-2}\sum_{i}\left(x_{i}-x_{i}^{*}\right)^{2} + \mathrm{constant}.
\end{eqnarray*}

We can imagine estimating each of the unobserved $x_i$ its own varational distribution
with sufficient statistics $x_i$ and $x_i^2$, though since
we are adding infinitesimal noise, it is not necessary to actually re-fit the model.
Infinitesimal noise in $x$ will not change the point estimates of $\theta$, and
since $\sigma_{x} \approx 0$, only the $x_i^{*}$ terms matter for the variational
posterior of $x_i$.
Using standard properties of the normal distribution and the fact that
$\sigma_{x} \approx 0$, the variational
expectations of the sufficient statistics are then given by:
\begin{eqnarray}\label{eq:normal_sens_vb_expectations}
E_{q*}\left(x_i\right) & = & x_i^{*}\\
E_{q*}\left(x_i^2\right) & = & \sigma_{x}^2 + x_i^{*2}\\
\var_{q*}\left(x_i\right) & = & \sigma_{x}^{2}\\
\var_{q*}\left(x_i^2\right) & = &
    4x_{i}^{*2}\sigma_{x}^{2} + 2 \sigma_{x}^{4}
    \approx 4x_{i}^{*2} \sigma_{x}^{2}\\
\cov_{q*}\left(x_i^2\right) & = & x_i^{*} \sigma_{x}^{2}.
\end{eqnarray}

It will be notationally convenient to stack the sufficient
statistics $x_i$ and
$x_i^2$ in a single vector, simply called $x$.  $E_{q*}\left(x\right)$
and $\var_{q*}\left(x\right)$, the variational mean and
covarivance of $x$, can be read off \eq{normal_sens_vb_expectations}.
We will also define $V_x$ by:
\begin{eqnarray*}
\var_{q*}\left(x\right) & := & \sigma_{x}^{2} V_x .
\end{eqnarray*}

To get the LRVB covariance, we need only to calculate the quantities
in equation \ref{eq:lrvb_est}.  In particular, we are interested in the
sub-matrix $\hat{\Sigma}_{\theta x}$, the estimated covariance
between $\theta$ and $x$.
Although we will derive the covariance between $\theta$ and all of
$x$, we can keep in mind that the leverage scores are actually
the submatrix of this covariance that corresponds to the $x_i$ terms,
not the $x_i^2$ terms.

To aid our computation, we will
use Schur compliments and the fact that $\sigma_{x}^2 \approx 0$.
In order to make the notation tidier, we will use some shorthand notation
relative to the main body of the text:
\begin{eqnarray*}
V & :=  &\Sigma_{q*}\\
R & :=  &\Sigma_{q*} H.
\end{eqnarray*}

We partition each matrix into $\theta$, $x$, and $z$ blocks:
\begin{eqnarray*}
\hat\Sigma & = & \left(\begin{array}{ccc}
\hat\Sigma_{\theta\theta} & \hat\Sigma_{\theta X} & \hat\Sigma_{\theta Z}\\
\hat\Sigma_{X\theta} & \hat\Sigma_{XX} & \hat\Sigma_{XZ}\\
\hat\Sigma_{Z\theta} & \hat\Sigma_{ZX} & \hat\Sigma_{ZZ}
\end{array}\right)\\
R & = & \left(\begin{array}{ccc}
R_{\theta} & R_{\theta X} & R_{\theta Z}\\
R_{X\theta} & 0 & R_{XZ}\\
R_{Z\theta} & R_{ZX} & 0
\end{array}\right)\\
V & = & \left(\begin{array}{ccc}
V_{\theta} & 0 & 0\\
0 & \sigma_{x}^{2} V_x & 0\\
0 & 0 & V_{Z}
\end{array}\right).
\end{eqnarray*}

We are interested in using equation \ref{eq:lrvb_est}, i.e.
$\Sigma = \left(I-R\right)^{-1} V$, to find the sub-matrix in
$\Sigma_{x\theta}$.  (We could just as well find $\Sigma_{\theta x}$.)
First, note that one can eliminate $z$ immediately with a Schur
complement.  In general, if the matrices partition into two groups $A$
and $B$, then
\begin{eqnarray}\label{eq:z_elimination}
\hat\Sigma_{A} & = & \left[ I - R_{AA} - R_{AB} \left(I - R_{BB}\right)^{-1} R_{BA} \right]^{-1} V_{A}.
\end{eqnarray}

In this case, let $B$ refer to the $z$ variables and $a$ to everything else.  Noting that $R_{ZZ} = 0$
and applying formula \ref{eq:z_elimination} gives
\begin{eqnarray*}
\left(\begin{array}{cc}
\hat\Sigma_{\theta\theta} & \hat\Sigma_{\theta X}\\
\hat\Sigma_{X\theta} & \hat\Sigma_{XX}
\end{array}\right) & = & \left[\left(\begin{array}{cc}
I_{\theta} & 0\\
0 & I_{X}
\end{array}\right)-\left(\begin{array}{cc}
R_{\theta} & R_{\theta X}\\
R_{X\theta} & 0
\end{array}\right)-\left(\begin{array}{c}
R_{\theta Z}\\
R_{XZ}
\end{array}\right)\left(\begin{array}{cc}
R_{Z\theta} & R_{ZX}\end{array}\right)\right]^{-1}\left(\begin{array}{cc}
V_{\theta} & 0\\
0 & \sigma_{x}^{2}V_x
\end{array}\right)\\
 & = & \left[\left(\begin{array}{cc}
I_{\theta}-R_{\theta} & -R_{\theta X}\\
-R_{X\theta} & I_{X}
\end{array}\right)-\left(\begin{array}{cc}
R_{\theta Z}R_{Z\theta} & R_{\theta Z}R_{ZX}\\
R_{XZ}R_{Z\theta} & R_{XZ}R_{ZX}
\end{array}\right)\right]^{-1}\left(\begin{array}{cc}
V_{\theta} & 0\\
0 & \sigma_{x}^{2}V_x
\end{array}\right)\\
 & = & \left(\begin{array}{cc}
\left(I_{\theta}-R_{\theta}-R_{\theta Z}R_{Z\theta}\right) & \left(-R_{\theta X}-R_{\theta Z}R_{ZX}\right)\\
\left(-R_{X\theta}-R_{XZ}R_{Z\theta}\right) & \left(I_{X}-R_{XZ}R_{ZX}\right)
\end{array}\right)^{-1}\left(\begin{array}{cc}
V_{\theta} & 0\\
0 & \sigma_{x}^{2}V_x
\end{array}\right).
\end{eqnarray*}

It will be enough to get the first row, $\left(\hat\Sigma_{\theta\theta}, \hat\Sigma_{\theta X}\right)$,
and for that we can use the Schur inverse.
\begin{eqnarray*}
BD^{-1}C & = & \left(R_{\theta X}+R_{\theta Z}R_{ZX}\right)\left(I_{X}-R_{XZ}R_{ZX}\right)^{-1}\left(R_{X\theta}+R_{XZ}R_{Z\theta}\right)\\
A-BD^{-1}C & = & I_{\theta}-R_{\theta}-R_{\theta Z}R_{Z\theta}-\left(R_{\theta X}+R_{\theta Z}R_{ZX}\right)\left(I_{X}-R_{XZ}R_{ZX}\right)^{-1}\left(R_{X\theta}+R_{XZ}R_{Z\theta}\right)\\
BD^{-1} & = & -\left(R_{\theta X}+R_{\theta Z}R_{ZX}\right)\left(I_{X}-R_{XZ}R_{ZX}\right)^{-1}.
\end{eqnarray*}

Given these quantities, since the $V$ matrix is block diagonal,
\begin{eqnarray*}
\hat\Sigma_{\theta X} & = & -\left(A-BD^{-1}C\right)^{-1}BD^{-1} V_x \sigma_{x}^{2}.
\end{eqnarray*}
It will be helpful to simplify this by taking $\sigma_{x}^{2}\rightarrow0$.
To aid in this, write:
\begin{eqnarray*}
R_{XZ} & = & \sigma_{x}^{2}Q_{XZ}\\
R_{X\theta} & = & \sigma_{x}^{2}Q_{X\theta}\\
D^{-1}=\left(I_{X}-R_{XZ}R_{ZX}\right)^{-1} & = & \left(I_{X}-\sigma_{x}^{2}Q_{XZ}R_{ZX}\right)^{-1}\\
 & \approx & I_{X}+\sigma_{x}^{2}Q_{XZ}R_{ZX}.
\end{eqnarray*}
Then:
\begin{eqnarray*}
A-BD^{-1}C & \approx & I_{\theta}-R_{\theta}-R_{\theta Z}R_{Z\theta}-\sigma_{x}^{2}\left(R_{\theta X}+R_{\theta Z}R_{ZX}\right)\left(I_{X}+\sigma_{x}^{2}Q_{XZ}R_{ZX}\right)\left(Q_{X\theta}+Q_{XZ}R_{Z\theta}\right)\\
 & \approx & I_{\theta}-R_{\theta}-R_{\theta Z}R_{Z\theta}-\sigma_{x}^{2}\left(R_{\theta X}+R_{\theta Z}R_{ZX}\right)\left(Q_{X\theta}+Q_{XZ}R_{Z\theta}\right)\\
\left(A-BD^{-1}C\right)^{-1} & \approx & \left(I_{\theta}-R_{\theta}-R_{\theta Z}R_{Z\theta}-\sigma_{x}^{2}\left(R_{\theta X}+R_{\theta Z}R_{ZX}\right)\left(Q_{X\theta}+Q_{XZ}R_{Z\theta}\right)\right)^{-1}\\
 & \approx & \left(I_{\theta}-R_{\theta}-R_{\theta Z}R_{Z\theta}\right)^{-1}  \times\\
 & & \left(I_{\theta}+\sigma_{x}^{2}\left(I_{\theta}-R_{\theta}-R_{\theta Z}R_{Z\theta}\right)^{-1} \left(R_{\theta X}+R_{\theta Z}R_{ZX}\right)\left(Q_{X\theta}+Q_{XZ}R_{Z\theta}\right)\right).
\end{eqnarray*}
Similarly,
\begin{eqnarray*}
BD^{-1} & = & -\left(R_{\theta X}+R_{\theta Z}R_{ZX}\right)\left(I_{X}-\sigma_{x}^{2}Q_{XZ}R_{ZX}\right)^{-1}\\
 & \approx & -\left(R_{\theta X}+R_{\theta Z}R_{ZX}\right)\left(I_{X}+\sigma_{x}^{2}Q_{XZ}R_{ZX}\right).
\end{eqnarray*}
This uses the matrix version of this Taylor expansion:
\begin{eqnarray*}
\frac{1}{1-r} & \approx & 1+r\\
\frac{1}{x-r} & = & \frac{x^{-1}}{1-x^{-1}r}\approx x^{-1}\left(1+x^{-1}r\right).
\end{eqnarray*}
as well as eliminating any term that exhibits terms that have second
or higher powers of $\sigma_{x}^{2}$. Observe that if $\sigma_{x}^{2}=0$,
then this gives:
\begin{eqnarray*}
\hat\Sigma_{\theta\theta}^{0}V_{\theta}^{-1} & = & \left(I_{\theta}-R_{\theta}-R_{\theta Z}R_{Z\theta}\right)^{-1}.
\end{eqnarray*}

This is the covariance of $\theta$ before performing the sensitivity
analysis. Substitute this in:
\begin{eqnarray*}
\left(A-BD^{-1}C\right)^{-1} & \approx & \hat\Sigma_{\theta\theta}^{0}V_{\theta}^{-1}\left(I_{\theta}+\sigma_{x}^{2}\hat\Sigma_{\theta\theta}^{0}V_{\theta}^{-1}\left(R_{\theta X}+R_{\theta Z}R_{ZX}\right)\left(Q_{X\theta}+Q_{XZ}R_{Z\theta}\right)\right).
\end{eqnarray*}

Now things are tidy enough to plug in for $\hat\Sigma_{\theta X}$.
\begin{eqnarray*}
\hat\Sigma_{\theta X} & = & -\left(A-BD^{-1}C\right)^{-1}BD^{-1}\left(\sigma_{x}^{2} V_{x}\right)\\
 & \approx & \hat\Sigma_{\theta\theta}^{0}V_{\theta}^{-1}\left(I_{\theta}+\sigma_{x}^{2}\hat\Sigma_{\theta\theta}^{0}V_{\theta}^{-1}\left(R_{\theta X}+R_{\theta Z}R_{ZX}\right)\left(Q_{X\theta}+Q_{XZ}R_{Z\theta}\right)\right)\times\\
 &  & \left(R_{\theta X}+R_{\theta Z}R_{ZX}\right)\left(I_{X}+\sigma_{x}^{2}Q_{XZ}R_{ZX}\right)\sigma_{x}^{2}V_{x}\\
 & \approx & \sigma_{x}^{2}\hat\Sigma_{\theta\theta}^{0}V_{\theta}^{-1}\left(R_{\theta X}+R_{\theta Z}R_{ZX}\right)\left(I_{X}+\sigma_{x}^{2}Q_{XZ}R_{ZX}\right) V_{x}\\
 & \approx & \sigma_{x}^{2}\hat\Sigma_{\theta\theta}^{0}V_{\theta}^{-1}\left(R_{\theta X}+R_{\theta Z}R_{ZX}\right) V_{x}.
\end{eqnarray*}

The final result is appealingly simple.
\begin{eqnarray}
  \nonumber
\hat\Sigma_{\theta x} & = & \sigma_{x}^{2} L_{\theta X}\\
  \label{eq:lrvb_lev_scores}
L_{\theta x} & := & \hat\Sigma_{\theta} \hat\Sigma_{q*,\theta}^{-1}
                    \left(R_{\theta X} + R_{\theta Z} R_{ZX}\right) V_{x}.
\end{eqnarray}
The quantities $L_{\theta x}$ are the leverage scores that are
plotted in \fig{LeverageGraph}.


\end{document}